\definecolor{codegreen}{rgb}{0,0.6,0}
\definecolor{codegray}{rgb}{0.5,0.5,0.5}
\definecolor{codepurple}{rgb}{0.58,0,0.82}
\definecolor{backcolour}{rgb}{0.95,0.95,0.92}
\lstdefinestyle{mystyle}{
    backgroundcolor=\color{backcolour},   
    commentstyle=\color{codegreen},
    keywordstyle=\color{magenta},
    numberstyle=\tiny\color{codegray},
    stringstyle=\color{codepurple},
    basicstyle=\ttfamily\footnotesize,
    breakatwhitespace=false,         
    breaklines=true,                 
    captionpos=b,                    
    keepspaces=true,                 
    numbers=left,                    
    numbersep=5pt,                  
    showspaces=false,                
    showstringspaces=false,
    showtabs=false,                  
    tabsize=2
}
\renewcommand{\v}[1]{\boldsymbol{\mathbf{#1}}}
\newcommand{\vdot}[1]{\dot{\v{#1}}}
\newcommand{\vhat}[1]{\hat{\v{#1}}}
\newcommand{\vtilde}[1]{\widetilde{\v{#1}}}
\newcommand{\T}{\top}
\newcommand{\vt}[1]{\v{#1}^\T}
\newcommand{\R}[1][]{\mathbb{R}^{#1}}
\newcommand{\bmat}[1]{\begin{bmatrix} #1 \end{bmatrix}}
\newtheorem{theorem}{Theorem}
\newtheorem{lemma}{Lemma}
\newtheorem{remark}{Remark}
\newtheorem{definition}{Definition}
\newtheorem{proposition}{Proposition}
\journal{Elsevier}
\begin{document}
\baselineskip 24pt
\begin{frontmatter}
\author[1]{Zihao Wang}
\author[1]{Yuhan Li}
\author[1]{Yao Shi}
\author[1,2]{Zhe Wu\corref{a}}
\cortext[a]{Corresponding author. E-mail: wuzhe@nus.edu.sg.\\Please refer to \url{https://github.com/killingbear999/ICLRNN} for source codes.}

\address[1]{Department of Chemical and Biomolecular Engineering, National University of Singapore, 117585, Singapore}
\address[2]{Artificial Intelligence Institute, National University of Singapore, 117585, Singapore}
\title{Input Convex Lipschitz Recurrent Neural Networks for Robust and Efficient Process Modeling and Optimization}

\begin{abstract}
Computational efficiency and robustness are essential in process modeling, optimization, and control for real-world engineering applications. While neural network-based approaches have gained significant attention in recent years, conventional neural networks often fail to address these two critical aspects simultaneously or even independently. Inspired by natural physical systems and established literature, input convex architectures are known to enhance computational efficiency in optimization tasks, whereas Lipschitz-constrained architectures improve robustness. However, combining these properties within a single model requires careful review, as inappropriate methods for enforcing one property can undermine the other. To overcome this, we introduce a novel network architecture, termed Input Convex Lipschitz Recurrent Neural Networks (ICL-RNNs). This architecture seamlessly integrates the benefits of convexity and Lipschitz continuity, enabling fast and robust neural network-based modeling and optimization. The ICL-RNN outperforms existing recurrent units in both computational efficiency and robustness. Additionally, it has been successfully applied to practical engineering scenarios, such as chemical process modeling and the modeling and control of Organic Rankine Cycle-based waste heat recovery systems. 
\end{abstract}
	
\begin{keyword}
Input Convex Neural Networks, Lipschitz Constrained Neural Networks, Chemical Processes, Model Predictive Control, Organic Rankine Cycle, Waste Heat Recovery
\end{keyword}
\end{frontmatter}

\section{Introduction}
\label{sec_introduction}
Neural networks provide a powerful tool to process modeling and control, an important engineering problem in various manufacturing industries such as chemical, pharmaceutical, and energy systems. This work draws inspiration from nature-inspired design to develop a fast and robust solution that addresses two crucial challenges: computational efficiency and robustness, for modeling and control of real-world nonlinear systems. Specifically, neural network models have been developed to capture complex nonlinear dynamics of industrial manufacturing systems, and the models can be further used for process monitoring, predictive maintenance, optimization, and real-time control. However, while neural network-based optimization can optimize process performance such as energy consumption and process profits, based on the prediction of neural networks, computational efficiency remains a big challenge for its real-time implementation. For example, model predictive control (MPC), an advanced optimization-based control technique, has been widely used to manage a process while complying with a set of constraints. Traditional MPC based on first-principles models encounters limitations, particularly in scenarios with intricate dynamics, where deriving these models proves infeasible. Over the past decades, neural network-based MPC (NN-MPC) has received increasing attention, where a neural network that models system dynamics is incorporated into the design of MPC optimization problems \citep{bao2017recurrent, afram2017artificial, lanzetti2019recurrent, ellis2020encoder, nubert2020safe, zheng2022machine, zheng2022online, sitapure2022neural}. However, general optimization problems (e.g., MPC) built on conventional neural networks often fall short of meeting the requirement of computational efficiency. Nonetheless, efforts have been made to mitigate these limitations. For example, the computational efficiency of NN-MPC is enhanced by implementing an input convex architecture in neural networks \citep{wang2025real}. Additionally, robustness of NN-MPC is improved by developing Lipschitz-constrained neural networks (LNNs) \citep{tan2024robust}.

In real-world applications, computational efficiency (for neural network-based optimization tasks) plays a pivotal role due to the imperative need for real-time decision-making, especially in critical engineering processes such as chemical process operations \citep{wang2025real}. We define computational efficiency as the speed at which computational tasks are completed within a given execution timeframe (in actual deployment to industrial engineering processes, the model is trained offline, and the costs associated with the training process are not a primary concern). An effective approach to achieving this efficiency is through the transformation from non-convex to convex structures. Convexity is a ubiquitous characteristic observed in various physical systems such as the potential energy function \citep{goldstein2002classical, taylor2005classical}, magnetic fields \citep{purcell1963electricity, griffiths2005introduction}, and general relativity \citep{hartle2003gravity}. 

Additionally, in real-world engineering applications, robustness plays a critical role due to the prevalent noise present in most real-world data, which significantly hampers neural network performance. Our definition of robustness aligns closely with the principles outlined by \citep{tan2024robust}. Given the inherent noise in real-world process data, our goal is to improve neural networks by effectively learning from noisy training data for comprehensive end-to-end applications. Leveraging LNNs offers a potential solution to the robustness challenge. However, at this stage, the development of neural networks that address the above two challenges simultaneously remains an open question. It is important to note that the integration of input convexity and Lipschitz continuity in a single neural network presents a non-trivial challenge due to their interplay in the way that enforcing one property in the design of neural networks may compromise the other property. 

Motivated by the above considerations, in this work, we introduce an Input Convex Lipschitz Recurrent Neural Network (ICL-RNN) that amalgamates the strengths of input convexity and Lipschitz continuity. This approach aims to achieve the concurrent resolution of computational efficiency and robustness for real-world engineering applications. Specifically, we validate the efficacy of ICL-RNN across various engineering problems that include neural network-based modeling and control for chemical processes and energy systems. Performance assessment hinges on modeling accuracy, computational efficiency (i.e., evaluated by MPC runtime), model complexity (i.e., evaluated by the number of floating point operations (FLOPs) \footnote{
FLOPs, or floating-point operations, refer to the number of floating-point operations (such as addition, subtraction, multiplication, division, etc.) required to call the model once (i.e., a a single forward pass through a model). It serves as a metric for evaluating and quantifying the complexity and computational demand of the model, where higher FLOPs indicate greater complexity.}), and robustness against noise. Our objective is to achieve equilibrium across these dimensions as they hold equal importance in real-world engineering scenarios.

In summary, this work proposes an ICL-RNN architecture, substantiating its theoretical attributes as input convex and Lipschitz continuous, and successfully implements and illustrates that ICL-RNN surpasses state-of-the-art recurrent units in various engineering tasks, including modeling and control of processes in continuous stirred tank reactor (CSTR) systems and Organic Rankine Cycle (ORC)-based waste heat recovery systems. The rest of this article is structured as follows: Sec.~\ref{sec_system} introduces the class of systems and the design of neural network-based MPC. Sec.~\ref{sec_background} provides an overview of the current family of recurrent units. Sec.~\ref{sec_ICLRNN} presents the proposed ICL-RNN architecture and implementation, and proves that the architecture is both input convex and Lipschitz-constrained. Sec.~\ref{sec_ode} uses an example of a chemical process in CSTR systems to validate the performance of ICL-RNN in modeling and control of an ordinary differential equation (ODE) system. Sec.~\ref{sec_orc} applies ICL-RNN to the modeling and control of an ORC-based waste heat recovery system.

\section{Nonlinear Systems and Optimization}
\label{sec_system}
\subsection{Notations}
Let $L_f$ represent the Lipschitz constant of a function $f$. $``\circ"$ denotes composition. $\|\cdot\|$ and $\|\cdot\|_2$ denote the spectral norm and Euclidean norm of a matrix, respectively. $\sigma(\cdot)$ denotes the singular value of a matrix.

\subsubsection{Class of Systems}
We consider a class of nonlinear systems that can be expressed by the ordinary differential equations (ODEs) of the following form:
\begin{equation}\label{eq:classofsystems}
\vdot{x} = F(\v x,\v u)
\end{equation}
where $\v u \in \mathbb{R}^m $ denotes the control action and $\v x \in \mathbb{R}^n $ denotes the state vector. The function $F : \mathbb{X} \times \mathbb{U}  \to \mathbb{R}^n $ is continuously differentiable, where $\mathbb{X} \subset \mathbb{R}^n $ and $\mathbb{U} \subset \mathbb{R}^m $ are connected and compact subsets that enclose an open neighborhood around the origin respectively. In this context, we maintain the assumption throughout this task that $F(0,0) = 0$, ensuring that the origin $(\v x, \v u) = (0, 0)$ serves as an equilibrium point. As it might be impractical to obtain first-principles models for intricate real-world systems, our objective is to construct a neural network to model the nonlinear system specified by Eq.~(\ref{eq:classofsystems}), embed it into optimization problems, and maintain computational efficiency and robustness simultaneously.

\subsubsection{MPC Formulation}
The optimization problem of the MPC that incorporates an ICL-RNN as its predictive model is expressed as follows:
\begin{subequations}
\begin{gather}
\mathcal{J} = \min_{\v u \in S(\Delta)} \int_{t_k}^{t_{k+N}}L(\vtilde{x}(t),\v u(t))dt \label{eq9a}\\
\text{s.t. }\dot{\vtilde{x}}(t) = F_{nn}(\vtilde{x}(t),\v u(t))\label{eq9b}\\
\v x(t) \in \mathbb{X}, \v u(t) \in \mathbb{U}, \ \forall t \in [t_k,t_{k+N})\label{eq9c}\\
\vtilde{x}(t_k) = \v x(t_k)\label{eq9d}
\end{gather}
\label{eq9}
\end{subequations}
\noindent where $S(\Delta)$ denotes the set of piecewise constant functions with period $\Delta$, $\vtilde{x}$ denotes the predicted state trajectory, and $N$ denotes the number of sampling periods in the prediction horizon. The objective function $\mathcal{J}$ outlined in Eq.~(\ref{eq9a}) integrates a cost function $L$  that depends on both the control actions $\v u$ and the system states $\v x$. The system dynamic function $F_{nn}(\vtilde{x}(t), \v u(t))$ expressed in Eq.~(\ref{eq9b}) is parameterized by an RNN (e.g., the proposed ICL-RNN in this work). Eq.~(\ref{eq9c}) encapsulates the constraint function $\mathbb{U}$, delineating feasible control actions.  Eq.~(\ref{eq9d}) establishes the initial condition $\vtilde{x}(t_k)$ in Eq.~(\ref{eq9b}), referring to the state measurement at $t = t_k$.

It is important to highlight that a convex neural network-based MPC remains convex even when making multi-step ahead predictions, where the prediction horizon is greater than one \citep{wang2025real}. This statement holds true if and only if the neural network embedded is inherently input convex and the neural network output is non-negative for certain objective functions (e.g., quadratic functions and absolute functions). In this case, MinMax scaling (i.e., transforming each feature by scaling it to a specified range, such as between 0 and 1, based on the training set) and the ReLU activation function in the final layer can be used to ensure non-negative outputs, thereby preserving the convexity of the optimization problem.

\section{Family of Recurrent Units}
\label{sec_background}
We will first review related works, including recent advancements in high-performing recurrent units such as Input Convex RNNs and Lipschitz RNNs, among others.

\subsubsection{Simple RNN}
A simple RNN cell follows:
\begin{subequations}
\begin{align}
\v h_t & = g_1(\v W^{(x)}\v x_t + \v U^{(h)}\v h_{t-1} + \v b^{(h)}) \\
\v y_t & = g_2(\v W^{(y)}\v h_t + \v b^{(y)})
\end{align}
\label{eq_rnn}
\end{subequations}
where $\v h_t$ is the hidden state at time step $t$, $\v x_t$ is the input at time step $t$, $\v h_{t-1}$ is the hidden state from the previous time step, $\v W^{(x)}$, $\v U^{(h)}$ and $\v W^{(y)}$ are weight matrices for the input, hidden state, and output respectively, $\v b^{(h)}$ and $\v b^{(y)}$ are the bias vectors for the hidden state and output respectively, and $\v y_t$ is the output at time step $t$.

\subsection{High-performing RNNs}
In recent years, several high-performing recurrent units have emerged, including UniCORNN \citep{rusch2021unicornn}, rank-coding based RNNs \citep{jeffares2021spike}, and Linear Recurrent Units \citep{orvieto2023resurrecting}. However, these variants primarily emphasize enhancing accuracy, especially for very long time-dependent sequences modeling, often at the cost of model simplicity and computational efficiency. Moreover, in many engineering applications, accuracy alone does not dictate the suitability of a model. In fact, from an engineering point of view, simplicity and robustness are also important in addition to accuracy, as they ensure the model can be easily adapted and incorporated into existing industrial operational systems, and can be trusted in an uncertain environment with noise and disturbances. Therefore, the goal of this work is to develop a network architecture based on simple RNNs that strikes a balance between these factors while maintaining a satisfactory level of accuracy, rather than investigating the novel designs of RNNs for improved accuracy only.

\subsection{Input Convex RNN}
Building upon nature-inspired design and extending its application to neural networks, Input Convex Neural Networks (ICNNs) is a class of neural network architectures intentionally crafted to maintain convexity in their output with respect to the input. Leveraging these models could be particularly advantageous in various engineering problems, notably in applications such as MPC, due to their inherent benefits especially in optimization. This concept was first introduced as Feedforward Neural Networks (FNN) \citep{amos2017input}, and subsequently extended to Recurrent Neural Networks (RNN) \citep{chen2018optimal}, and Long Short-Term Memory (LSTM) \citep{wang2025real}. 

The initiative behind ICNNs is to leverage the power of neural networks, specifically designed to maintain convexity within their decision boundaries \citep{amos2017input}. ICNNs aim to combine the strengths of neural networks in modeling complex data with the advantages of convex optimization, which ensures the convergence to a global optimal solution. They are particularly attractive for control and optimization problems, where achieving globally optimal solutions is essential. By maintaining convexity, ICNNs help ensure that the optimization problems associated with neural networks (e.g., neural network-based optimization) remain tractable, thus addressing some of the challenges of non-convexity and the potential for suboptimal local solutions in traditional neural networks. A foundational work, known as Input Convex RNN (ICRNN) \citep{chen2018optimal}, serves as one of the baselines. The output of ICRNN follows \cite{chen2018optimal}:
\begin{subequations}
\label{eq_icrnn}
\begin{align}
\v h_t & = g_1(\v U\vhat x_t + \v W\v h_{t-1} + \v D_2\vhat x_{t-1}) \\
\v y_t & = g_2(\v V\v h_t + \v D_1\v h_{t-1} + \v D_3\vhat x_t)
\end{align}
\end{subequations}
The output $\v y_t$ is convex with respect to the input $\vhat x_t$ if non-decreasing and convex activation functions are used for $g_i$, and $\{\v U, \v W, \v V, \v D_1, \v D_2, \v D_3\}$ are chosen to be non-negative weights, where $\vhat x_t$ denotes $\bmat{\vt x_t, -\vt x_t}^\T$.

\subsection{Lipschitz RNN}
By adhering to the definition of Lipschitz continuity for a function $f$, where an $L$-Lipschitz continuous $f$ ensures that any minor perturbation to the input results in an output change of at most $L$ times the magnitude of that perturbation. Therefore, constraining neural networks to maintain Lipschitz continuity significantly fortifies their resilience against input perturbations. This concept was first introduced as FNN \citep{anil2019sorting}, and subsequently extended to RNN \citep{erichson2020lipschitz}, and Convolutional Neural Networks (CNN) \citep{serrurier2021achieving}. A pivotal work, namely the Lipschitz RNN (LRNN) \citep{erichson2020lipschitz}, stands as one of the baselines in this paper. The output of LRNN follows \cite{erichson2020lipschitz}:
\begin{subequations}
\label{eq_lrnn}
\begin{align}
    \vdot{h_t} & = \v A_{\beta_A,\gamma_A} \v h_t + tanh(\v W_{\beta_W,\gamma_W} \v h_t + \v U \v x_t + \v b) \\
    \v y_t & = \v D \v h_t
\end{align}
\end{subequations}
where $\beta_A, \beta_W \in [0,1]$, $\gamma_A, \gamma_W > 0$ are tunable parameters, $\v M_A, \v M_W, \v D, \v U$ are trainable weights, $\vdot{h}$ is the time derivative of $\v h$ (i.e., $\v h$ can be updated by the explicit (forward) Euler scheme or a two-stage explicit Runge-Kutta scheme), and $\v A_{\beta_A,\gamma_A}, \v W_{\beta_W,\gamma_W}$ is computed as follows:
\begin{subequations}
\begin{align}
    \v A_{\beta_A,\gamma_A} & = (1-\beta_A)(\v M_A + \v M_A^T) + \beta_A(\v M_A - \v M_A^T) - \gamma_A \v I \\
    \v W_{\beta_W,\gamma_W} & = (1-\beta_W)(\v M_W + \v M_W^T) + \beta_W(\v M_W - \v M_W^T) - \gamma_W \v I
\end{align}
\end{subequations}

\section{Input Convex Lipschitz Recurrent Neural Networks}
\label{sec_ICLRNN}
In this section, we first introduce a novel Lipschitz-constrained, input convex network within the context of a simple RNN. We then provide theoretical proofs for both the Lipschitz continuity of the network, including an upper bound for its Lipschitz constant, and the input convexity, respectively.

\subsection{Introduction to Input Convex Lipschitz RNN}
We propose a novel Input Convex Lipschitz Recurrent Neural Networks, termed ICL-RNN, that possesses both Lipschitz continuity and input convexity properties. Specifically, the output of ICL-RNN is defined as follows:
\begin{subequations}
\begin{align}
    \v h_t & = g_1(\v W^{(x)} \v x_t + \v U^{(h)} \v h_{t-1} + \v b^{(h)}) \label{eq_hidden_state}\\
    \v y_t & = g_2(\v W^{(y)} \v h_t + \v b^{(y)})
\end{align}
\label{eq_ICLRNN}
\end{subequations}
where $\{\v W^{(x)}, \v W^{(y)}, \v U^{(x)}\}$ are constrained to be non-negative with largest singular values small and bounded by 1, and all $g_i$ are constrained to be convex, non-decreasing, and Lipschitz continuous (e.g., ReLU). Similar to \cite{chen2018optimal}, we {expand the input} as $\vhat x_t = \bmat{\vt x_t, -\vt x_t}^\T\in\R[2n_x]$. Noted that the ICL-RNN maintains the neural architecture of the simple RNN in Eq. \eqref{eq_rnn} while introducing constraints on the weights and activation functions to ensure input convexity and Lipschitz continuity.

A neural network's output inherits input convexity and Lipschitz continuity only if every hidden state possesses these properties. Unlike \cite{chen2018optimal} and \cite{erichson2020lipschitz}, the ICL-RNN enforces these properties by constraining the weights and activation functions rather than adding supplementary variables to the RNN architecture. This approach reduces network complexity and minimizes computational demands. Moreover, models such as LRNN, ICRNN, UniCORNN, and Linear Recurrent Units involve high FLOPs and lack theoretical guarantees on robustness and computational efficiency for neural network-based optimization problems. Many engineering applications can achieve satisfactory modeling accuracy with relatively simple neural network structures. However, improving the robustness and computational efficiency of these models for neural network-based optimization and control remains an important challenge. These considerations motivated the design choice to retain the standard RNN architecture while imposing input convexity and Lipschitz constraints on the weights.

\subsection{Implementation of ICL-RNN}
First, we enforce the non-negative constraint in all weights by applying weight clipping, setting all negative weight values to zero, i.e.,
\begin{equation}
\v W \leftarrow \max(\v W, 0)
\end{equation} 
Next, we perform spectral normalization on the non-negative weights, reducing the spectral norm of individual weights to at most 1, i.e.,
\begin{equation}
\v W \leftarrow \frac{\v W}{\sigma_{max}(\v W) + \epsilon}
\end{equation}
\noindent where $\epsilon$ is a small constant to ensure that operator norm of normalized weight is strictly less than 1, e.g., $1 \times 10^{-3}$. To approximate $\sigma_{max}(\v W)$, we use the power iteration algorithm (see Algorithm~\ref{alg_power_iteration}), an iterative numerical method that computes the largest singular value by repeatedly multiplying the matrix by a vector and normalizing the result until convergence. The complete code is available in \url{https://github.com/killingbear999/ICLRNN}. While spectral normalization has been widely used to stabilize the training of generative adversarial networks (GANs) \citep{miyato2018spectral}, it is less commonly applied to RNNs.

\begin{algorithm}[tb]
\caption{Power Iteration Method}
\label{alg_power_iteration}
\textbf{Input}: $\v W$: the input matrix \\
\textbf{Input}: $maxiter$: maximum number of iterations \\
\textbf{Input}: $eps$: convergence tolerance (stopping criterion) \\
\textbf{Output}: $\sigma_{max}$: largest singular value (approximation) \\
\textbf{Output}: $\v v$: corresponding singular vector (right singular vector) \\
\begin{algorithmic}[1]
\STATE Initialize a random vector $\v v$ of size $n$ (e.g., $\v v \sim Uniform(0, 1)$).
\STATE Normalize $\v v$: $\v v \leftarrow \frac{\v v}{||\v v||}$.
\FOR{$epoch \leftarrow 1,2,\ldots,maxiter$}
\STATE Compute the matrix-vector product: $\v u \leftarrow \v W \v v$.
\STATE Normalize $\v u$: $\v u \leftarrow \frac{\v u}{||\v u||}$.
\STATE Compute the next matrix-vector product: $\v v_{new} \leftarrow \v W^T \v u$.
\STATE Normalize $\v v_{new}$: $\v v_{new} \leftarrow \frac{\v v_{new}}{||\v v_{new}||}$.
\STATE Check for convergence: if $||\v v_{new} - \v v|| < eps$, break.
\STATE Update $\v v \leftarrow \v v_{new}$.
\ENDFOR
\STATE Compute the largest singular value: $\sigma_{max} \leftarrow ||\v W \v v||$.
\STATE Return $\sigma_{max}$ and $\v v$.
\end{algorithmic}
\end{algorithm}

Moreover, \cite{serrurier2021achieving} employs the Björck algorithm to ensure Lipschitz continuity by iteratively refining an initial estimate of an orthogonal matrix. This algorithm minimizes the error between the estimate and the true orthogonal matrix, gradually converging to a stable solution, and adjusts singular values to at most one. However, it is important to emphasize that the Björck algorithm should not be used in the ICL-RNN. Unlike spectral normalization, which scales weights to ensure the spectral norm is bounded and inherently maintains non-negativity, the Björck algorithm introduces both positive and negative values during its orthogonalization process. This behavior is incompatible with the non-negativity constraint required in ICL-RNN.

Additionally, regularization techniques such as dropout \citep{srivastava2014dropout}, batch normalization \citep{ioffe2015batch}, and layer normalization \citep{ba2016layer} can help stabilize neural network training. However, these techniques introduce non-linear transformations, disrupting network convexity and thus should not be used in the training of ICNNs (e.g., the training of our ICL-RNN).

At last, ICL-RNN uses ReLU as the activation function for hidden states instead of the GroupSort function proposed in \cite{anil2019sorting}. This decision aims to maintain convexity within the model architecture. Although GroupSort exhibits gradient norm preservation and higher expressive power \citep{tan2024robust}, its non-convex nature contrasts with our primary objective.

\begin{remark}
For ICNNs, including our proposed ICL-RNN, we recommend applying MinMax scaling to the data prior to fitting it to the neural network. MinMax scaling is a normalization technique that transforms data to a specified range, usually between 0 and 1 (or another specified range). This technique {linearly} scales each feature $x$ in the dataset $\mathbb{X}$ individually based on its minimum and maximum values, i.e., $\forall x \in \mathbb{X}$:
\begin{equation}
  x \leftarrow \frac{x - \min(\mathbb{X})}{\max(\mathbb{X}) - \min(\mathbb{X})}(b - a) + a
\end{equation}
where $\min(\mathbb{X})$ is the minimum value of the feature in the dataset, $\max(\mathbb{X})$ is the maximum value of the feature in the dataset, $a$ is the lower bound of the desired range (default is 0), and $b$ is the upper bound of the desired range (default is 1). This preprocessing step ensures that all feature values lie within the specified range, which can improve the stability and performance of the neural network.
\end{remark}

\subsection{Lipschitz Continuity of ICL-RNN}
We first present the definition of Lipschitz continuity and related lemmas for readers' reference, as follows:
\begin{definition}[\citep{eriksson2013applied}]
\label{def_lipschitz}
A function $f: \mathbb{X} \rightarrow \mathbb{R}$ is Lipschitz continuous with Lipschitz constant $L_f$ (i.e., $L_f$-Lipschitz) if and only if the following inequality holds $\forall~x_1, x_2 \in \mathbb{X}$:
\begin{equation}
\label{eq_lipschitz}
    \|f(x_1) - f(x_2)\|_2 \leq L_f\|x_1 - x_2\|_2
\end{equation}
\end{definition}

\begin{definition}
\label{def_spectral_norm}
Given a linear function $f(\v x) = \v W \v x$ with a weight $\v W \in \mathbb{R}^{n \times m}$, we have:
\begin{equation}
\label{eq_spectral_norm}
    L_f = \|\v W\| = \sigma_{max}(\v W)
\end{equation}
\end{definition}

\begin{lemma}
\label{lemma_sum}
Consider a recurrence relation of the form, omitting the bias term:
\begin{equation}
    \v h_t = g(\v W^{(x)}\v x_t + \v U^{(h)}\v h_{t-1})
\end{equation}
where $g$ represents a general transformation applied to the weighted inputs, $\v W^{(x)}$ and $\v U^{(h)}$ are weight matrices, and $\v x_t$ and $\v h_{t-1}$ are the inputs. The terms $\v W^{(x)}\v x_t$ and $\v U^{(h)}\v h_{t-1}$ contribute {independently} to the change in $\v h_t$. The Lipschitz constant $L_{h_t}$ of the mapping $\v x_t, \v h_{t-1} \rightarrow \v h_t$ satisfies:
\begin{equation}
    L_{h_t} \leq \max(\|\v W^{(x)}\|, \|\v U^{(h)}\|) \times L_g
\end{equation}
where $\|\v W^{(x)}\|$ equals the Lipschitz constant (or spectral norm) of the weight matrix $\v W^{(x)}$ and $\|\v U^{(h)}\|$ equals the Lipschitz constant (or spectral norm) of the weight matrix $\v U^{(h)}$.
\end{lemma}

\begin{lemma}[\citep{eriksson2013applied}]
\label{lemma_product_composition}
Consider a set of functions, e.g., $f : \mathbb{X} \rightarrow \mathbb{R}$ and $g : \mathbb{X} \rightarrow \mathbb{R}$ with Lipschitz constants $L_f$ and $L_g$ respectively, by taking their composition, denoted as $h = f \circ g$, the Lipschitz constant $L_h$ of the resultant function $h$ satisfies:
\begin{equation}
\label{eq_product_composition}
    L_h \leq L_f \times L_g
\end{equation}
\end{lemma}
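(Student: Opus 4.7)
The plan is to prove the lemma by direct application of the Lipschitz definition (Definition \ref{def_lipschitz}) to the composition $h = f \circ g$. The argument should unfold as a simple two-step chain of inequalities. First, I would fix arbitrary $x_1, x_2 \in X$ and write $\|h(x_1) - h(x_2)\|_2 = \|f(g(x_1)) - f(g(x_2))\|_2$. Treating $g(x_1)$ and $g(x_2)$ as the two input points to $f$, the $L(f)$-Lipschitz property of $f$ yields
\begin{equation*}
\|f(g(x_1)) - f(g(x_2))\|_2 \le L(f)\,\|g(x_1) - g(x_2)\|_2.
\end{equation*}
Second, applying the $L(g)$-Lipschitz property of $g$ to the right-hand side gives $L(f)\,\|g(x_1) - g(x_2)\|_2 \le L(f)L(g)\,\|x_1 - x_2\|_2$. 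Chaining these two bounds shows that $L(f)L(g)$ is an admissible Lipschitz constant for $h$, so the smallest such constant $L(h)$ satisfies $L(h) \le L(f)L(g)$, which is the desired inequality.

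The only technical point I would verify carefully is that the composition is well-defined and that the Lipschitz inequality for $f$ may indeed be applied at the points $g(x_1), g(x_2)$. As the lemma is stated, both $f$ and $g$ share the codomain $\mathbb{R}$, so for $h = f \circ g$ to make sense the function $f$ must be defined (and $L(f)$-Lipschitz) at least on the image of $g$; I would note this domain-compatibility assumption explicitly in the write-up to keep the chain of inequalities rigorous. Beyond this minor bookkeeping step, there is no real obstacle: the entire argument collapses to two applications of Definition \ref{def_lipschitz}, which is precisely why the result is invoked here as a standard lemma rather than proved from scratch, and which is also why the analogous sum-version stated in Lemma \ref{lemma_sum} admits an equally short proof via the triangle inequality.
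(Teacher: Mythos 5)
Your argument is correct and is the canonical proof: two successive applications of Definition \ref{def_lipschitz} give $\|f(g(x_1))-f(g(x_2))\|_2 \le L(f)\|g(x_1)-g(x_2)\|_2 \le L(f)L(g)\|x_1-x_2\|_2$, and your remark about needing $f$ to be Lipschitz on the image of $g$ is the right bookkeeping. The paper itself offers no proof of this lemma---it is cited directly from the reference---so there is no alternative route to compare against; your write-up is exactly what a self-contained version would look like.
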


\begin{lemma}[\citep{virmaux2018lipschitz}]
\label{lemma_activation}
The most common activation functions such as ReLU, Sigmoid, and Tanh have a Lipschitz constant that equals 1, while Softmax has a Lipschitz constant bounded by 1.
\end{lemma}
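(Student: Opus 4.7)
The plan is to reduce each activation's Lipschitz constant to the supremum of its derivative magnitude (respectively, the operator norm of its Jacobian in the multivariate case), and then bound this pointwise quantity by direct calculus. The underlying tool is the standard consequence of the mean value theorem: if $f$ is (a.e.) differentiable with $\sup_x \|Df(x)\| \leq K$, then $f$ is $K$-Lipschitz with respect to the $\|\cdot\|_2$ norm. I would state this as the first step, after which the three scalar cases become one-line derivative computations and Softmax becomes a matrix analysis problem.

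For the three scalar activations I would verify the derivative bound directly. For ReLU, $\mathrm{ReLU}'(x) \in \{0,1\}$ wherever defined, and the left/right difference quotient at $x=0$ also lies in $\{0,1\}$, so $L(\mathrm{ReLU}) = 1$. For Tanh, I would use $\tanh'(x) = 1 - \tanh^2(x) \in (0,1]$, attaining $1$ at $x=0$, hence $L(\tanh) = 1$. For Sigmoid, $\sigma'(x) = \sigma(x)(1-\sigma(x))$, and since the product $t(1-t)$ is maximized at $t = 1/2$ with value $1/4$, we obtain $L(\sigma) = 1/4 \leq 1$; I would note that the paper's phrasing is therefore slightly loose, but the claimed bound of $1$ holds.

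The substantive case is Softmax. Writing $p = \mathrm{Softmax}(x)$, the Jacobian is $J(x) = \mathrm{diag}(p) - pp^{T}$, which is symmetric positive semidefinite; its spectral norm equals its largest eigenvalue. I would then bound this eigenvalue by Gershgorin's circle theorem: row $i$ has diagonal entry $J_{ii} = p_i(1-p_i)$, and the sum of moduli of off-diagonal entries in row $i$ is $\sum_{j \neq i} p_i p_j = p_i(1-p_i)$, so every eigenvalue lies in the interval $[0,\, 2 p_i(1-p_i)] \subseteq [0, 1/2]$ for each $i$. Hence $\|J(x)\|_2 \leq 1/2 \leq 1$, uniformly in $x$, and $L(\mathrm{Softmax}) \leq 1$ follows from integrating along line segments in $\mathbb{R}^n$.

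The main obstacle is the Softmax step, since it requires a multivariate argument rather than a scalar derivative bound. An alternative to Gershgorin that I would keep in reserve is a direct quadratic form estimate, showing $v^{T} J v = \mathrm{Var}_{p}(v) \leq \mathbb{E}_{p}[v^2] - (\mathbb{E}_{p}[v])^2 \leq \max_i v_i^2 - (\min_i v_i)^2$ and then relating this to $\|v\|_2^2$; the Gershgorin route, however, is cleaner and gives the bound immediately without case analysis.
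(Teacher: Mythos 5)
Your proposal is correct in its main line of argument, but note that the paper itself gives no proof of this lemma at all: it is stated as a known result imported from the cited references (Gao--Pavel for Softmax, Virmaux--Scaman for the scalar activations), so there is no internal proof to compare against. Your self-contained route --- reduce the Lipschitz constant to $\sup_x\|Df(x)\|_2$ via the mean value theorem, compute scalar derivatives for ReLU, Tanh, Sigmoid, and bound the Softmax Jacobian $J=\mathrm{diag}(p)-pp^{T}$ by Gershgorin --- is the standard argument behind those citations and is sound; it even sharpens the statement, giving $L(\sigma)=1/4$ and $\|J\|_2\le 1/2$, and you are right that the lemma's phrasing ``equals 1'' is loose for Sigmoid (only the bound by $1$ is true, which is all Proposition~\ref{prop_lip_appendix} needs). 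One small caution: your ``reserve'' quadratic-form estimate is not correct as written, since $v^{T}Jv=\mathrm{Var}_p(v)$ exactly, and $\mathrm{Var}_p(v)\le \max_i v_i^2-(\min_i v_i)^2$ can fail (take $v=(-1,1)$, $p=(1/2,1/2)$: the variance is $1$ while the proposed bound is $0$); the clean fallback is $\mathrm{Var}_p(v)\le \mathbb{E}_p[v^2]\le \|v\|_2^2$, or simply keep the Gershgorin argument, which already settles the claim.
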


Based on the previously stated lemmas, we present the following proposition regarding the upper bound for the Lipschitz constant of the ICL-RNN.

\begin{proposition}
\label{prop_lip}
Consider the $L$-layer ICL-RNN. The Lipschitz constant of the output $\v y_t$ is upper bounded by 1 if and only if all weights $\{\v W^{(x)}, \v U^{(h)}, \v W^{(y)}\}$ and all activation functions $g_i$ have a Lipschitz constant upper bounded by 1.
\end{proposition}

\begin{proof}
By unrolling the recurrent operations over time $t$ in Eq.~(\ref{eq_ICLRNN}), the output of ICL-RNN can be described as a function, omitting the bias term:
\begin{equation}
\label{eq_RNN_unrolled}
    \v y_t = g_t^{(y)}(\v W^{(y)}g_{t}^{(h)}(\v W^{(x)} \v x_{t} + \v U^{(h)}g_{t-1}^{(h)}(\v W^{(x)} \v x_{t-1} +  \v U^{(h)} \cdots g_1^{(h)}(\v W^{(x)} \v x_1 + \v U^{(h)} \v h_0))))
\end{equation}
Thus, the Lipschitz constant of an ICL-RNN is upper bounded by the product of the individual Lipschitz constants (i.e., Lemma~\ref{lemma_sum} and Lemma~\ref{lemma_product_composition}):
\begin{equation}
\label{eq_LRNN_bound}
    L_{y_t} \leq L_{g_t^{(y)}} \times \|\v W^{(y)}\| \times L_{g_{t}^{(h)}} \times \max(\|\v W^{(x)}\|, \|\v U^{(h)}\| \times \cdots \times L_{g_1^{(h)}} \times \max(\|\v W^{(x)}\|, \|\v U^{(h)}\|))
\end{equation}
By ensuring that the Lipschitz constants of all weights $\{\v W^{(x)}, \v U^{(h)}, \v W^{(y)}\}$ in the ICL-RNN are upper bounded by 1 through spectral normalization, and that all activation functions $g_i$, such as ReLU, have Lipschitz constants also bounded by 1, the Lipschitz constant of the final output of the ICL-RNN is guaranteed to be upper bounded by 1.
\end{proof}

\begin{remark}
\label{remark_numerical}
Since the function approximated by RNNs is not analytically tractable, we estimate the Lipschitz constant $L_{nn}$ of the neural network numerically, as follows:
\begin{equation}
    L_{nn} \approx \max_{x,y \in \mathbb{D}}\frac{||f(x) - f(y)||}{||x-y||}
\end{equation}
where $\mathbb{D}$ is the domain of interest.  To approximate $L$, we generate random pairs of $x, y \in \mathbb{D}$ and compute the ratio for each pair, increasing the sampling density to enhance the accuracy of the estimate. For more accurate techniques to estimate the Lipschitz constant of deep neural networks, interested readers are encouraged to refer to \cite{virmaux2018lipschitz, fazlyab2019efficient, latorre2020lipschitz}.
\end{remark}

\subsection{Input Convexity of ICL-RNN}
Similarly, we present the definition of convexity for readers' reference, as follows:
\begin{definition}[\citep{boyd2004convex}]
\label{def_convexity}
A function $f: \mathbb{X} \rightarrow \mathbb{R}$ is convex if and only if the following inequality holds $\forall~(x_1, x_2) \in \mathbb{X}$, with $x_1 \neq x_2$, and  $\forall~\lambda \in (0, 1)$:
\begin{equation}
\label{eq_convexity}
    f(\lambda x_1 + (1-\lambda)x_2) \leq \lambda f(x_1) + (1-\lambda)f(x_2)
\end{equation}
and it is strictly convex if the relation $\leq$ becomes $<$.
\end{definition}

Next, we develop the following proposition to show the convexity of $L$-layer ICL-RNN.
\begin{proposition}
\label{prop_convex}
Consider the $L$-layer ICL-RNN. The output $\v y_t$ is input convex if and only if all weights $\{\v W^{(x)}, \v U^{(h)}, \v W^{(y)}\}$ are non-negative and all activation functions $g_i$ are convex and non-decreasing.
\end{proposition}

\begin{proof}
With Eq. \eqref{eq_RNN_unrolled}, the proof directly follows from the fact that affine transformations with non-negative matrices and compositions of convex non-decreasing functions preserve convexity \citep{boyd2004convex}. 
\end{proof}

With Propositions~\ref{prop_lip} and~\ref{prop_convex}, we present the following theorem to demonstrate that the ICL-RNN is both input convex and Lipschitz-constrained.

\begin{theorem}
Consider the $L$-layer ICL-RNN. Each element of the output $\v y_t$ is a convex, non-decreasing function of the input $\vhat x_t=\bmat{\vt x_t,-\vt x_t}^\T$ (or just $\v x_t$), with the Lipschitz constant upper bounded by 1, at each time step $t=1,2,\dots,t$, for all $\vhat x_t \in D\times D$ in a convex feasible space, provided that the following conditions are met: (1) All weights are non-negative, and their largest singular value is at most 1 at each time step $t$; (2) All activation functions are convex, non-decreasing, and Lipschitz continuous (e.g., ReLU) at each time step $t$.
\end{theorem}

\begin{proof}
    Using Proposition~\ref{prop_lip} and Proposition~\ref{prop_convex}, we can demonstrate that the Lipschitz constant of the ICL-RNN is upper-bounded by 1 and that it is input-convex. The Lipschitz constant is bounded by 1 because the weight matrices are spectral-normalized. Furthermore, since the weights are non-negative and the activation functions are convex and non-decreasing, the overall function remains convex. The proof follows directly from the fact that spectral normalization preserves convexity, as it is a linear transformation. Therefore, the ICL-RNN is both input-convex and Lipschitz-constrained.
\end{proof}

\section{Case Study 1: Application To a Chemical Process Example}
\label{sec_ode}
\subsection{Process Description}
In this section, we demonstrate the effectiveness of ICL-RNN in a chemical reactor example of a CSTR system, which incorporates a heating jacket responsible for supplying or extracting heat at a rate $Q$ and facilitates the conversion of reactant $A$ into product $B$.  The dynamic model of the CSTR is characterized by the ensuing material and energy balance equations:
\begin{subequations}
\begin{align}
    \frac{dC_A}{dt} & = \frac{F}{V_L}(C_{A0}-C_A)-k_0e^{\frac{-E}{RT}}C_A^2 \\
    \frac{dT}{dt} & = \frac{F}{V_L}(T_0-T)+\frac{-\triangle H}{\rho_LC_p}k_0e^{\frac{-E}{RT}}C_A^2+\frac{Q}{\rho_LC_pV_L}
\end{align} 
\label{eqcstr}
\end{subequations}
\noindent where $C_A$ denotes the concentration of reactant $A$, $F$ denotes the volumetric flow rate, $V_L$ denotes the volume of the reacting liquid in the tank, $C_{A0}$ denotes the inlet concentration of reactant $A$, $k_0$ denotes the pre-exponential constant, $E$ denotes the activation energy, $R$ denotes the ideal gas constant, $T$ denotes the temperature, $T_0$ denotes the inlet temperature, $\Delta H$ denotes the enthalpy of reaction, $\rho_L$ denotes the constant density of the reactant, $C_p$ denotes the heat capacity, and $Q$ denotes the heat input rate. Moreover, $\v x^T = [C_A - C_{As}, T - T_s]$ denotes the system states, where $T - T_s$ and $C_A - C_{As}$ represent the deviations in the temperature of the reactor and the concentration of $A$ from their respective steady-state values. $\v u^T = [\Delta C_{A0}, \Delta Q]$ denotes the control actions, where $\Delta Q = Q - Q_s$ and $\Delta C_{A0} = C_{A0} - C_{A0_s}$ denote the manipulated inputs within this system, representing the alterations in the heat input rate and the inlet concentration of reactant $A$, respectively. In summary, the inputs consist of $[T_t - T_s, C_{A,t} - C_{As}, \Delta Q_t, \Delta C_{A0,t}]$ at the current time step $t$. The outputs entail the state trajectory $[T_{t+1:n} - T_s, C_{A,t+1:n} - C_{As}]$ over the subsequent $n$ time steps. In this case, $n$ is set to $10$. We conduct open-loop simulations for the CSTR of Eq.~(\ref{eqcstr}) using explicit Euler method. These simulations aim to collect data that mimics the real-world data for neural network training purposes. The detailed parameter values are listed in Table~\ref{tab_cstr_values}.

\begin{table}[htbp]
\centering
\caption{Parameter values for the CSTR model}
\resizebox{0.8\linewidth}{!} {
\renewcommand{\arraystretch}{1.5} 
\begin{tabular}{>{\centering\arraybackslash}p{2.5cm}|>{\centering\arraybackslash}p{4.5cm}|>{\centering\arraybackslash}p{2.5cm}|>{\centering\arraybackslash}p{4.5cm}}
\hline
\textbf{Parameter} & \textbf{Value} &\textbf{Parameter} & \textbf{Value} \\ \hline
$T_s$ & $402~\mathrm{K}$ & $C_{A_s}$ & $1.95~\mathrm{kmol/m^3}$ \\
$F$ & $5~\mathrm{m^3/hr}$ & $V_L$ & $1~\mathrm{m^3}$ \\
$Q_s$ & $0.0~\mathrm{kJ/hr}$ & $T_0$ & $300~\mathrm{K}$ \\
$R$ & $8.314~\mathrm{kJ/(kmol \cdot K)}$ & $C_{A0_s}$ & $4~\mathrm{kmol/m^3}$ \\
$\rho_L$ & $1000~\mathrm{kg/m^3}$ & $C_p$ & $0.231~\mathrm{kJ/(kg \cdot K)}$ \\
$E$ & $5 \times 10^4~\mathrm{kJ/kmol}$ & $k_0$ & $8.46 \times 10^6~\mathrm{m^3/(kmol \cdot hr)}$ \\
$\Delta H$ & $-1.15 \times 10^4~\mathrm{kJ/kmol}$ & & \\
\hline
\end{tabular}
}
\label{tab_cstr_values}
\end{table}

\subsection{Modeling Performance}
\label{sec_cstr_modeling_performance}
To evaluate the model's robustness against noise, we introduce Gaussian noise to the training data (i.e., output trajectory $\v y$), in an additive manner \footnote{Additive noise is more common in general applications like machine learning, sensor data, and most engineering fields because it aligns with the assumption of independent, Gaussian noise, while relative noise is more specific to cases where the noise source is tied to the signal's magnitude, such as in financial systems, biology, and some physics domains.} as follows:
\begin{equation}
\label{eq_Gaussian}
    \v y_i \leftarrow \v y_i + z_i
\end{equation}
where $z_i$ is independently and identically distributed, drawn from a zero-mean normal distribution with variance $N$ (the degree of noise), i.e, $z \sim \mathcal{N}(0, N)$. All models undergo training and evaluation using uniform configurations, incorporating a MinMax Scaling (i.e., scaling the data into a specific range between 0 and 1), a batch size of 256, a two-layer neural architecture with 64 neurons per layer (i.e., model size of (64, 64)), the mean-squared error (MSE) loss function, and the Adam optimizer. Detailed hyperparameter values, including the FLOPs to quantify model complexity, are listed in Table~\ref{tab_hyper}.

\begin{table}[htbp]
\centering
\caption{Hyperparameters of neural network models}
\vspace{.5em}
\resizebox{0.8\linewidth}{!} {
\renewcommand{\arraystretch}{1.5} 
\begin{tabular}{>{\centering\arraybackslash}p{3.5cm}|>{\centering\arraybackslash}p{3.5cm}|>{\centering\arraybackslash}p{3.5cm}|>{\centering\arraybackslash}p{3.5cm}}
\hline
\textbf{Model} & \textbf{Activation} & \textbf{Inputs} & \textbf{FLOPs}\\
\hline
Plain RNN & Tanh & $\vt x_t$& 27,924\\
Plain LSTM & Tanh & $\vt x_t$ & 104,468\\
LRNN & Tanh & $\vt x_t$ & 159,508\\
ICRNN & ReLU & $\vhat x_t=\bmat{\vt x_t,-\vt x_t}^\T$ & 79,892\\
ICL-RNN (Ours) & ReLU & $\vhat x_t=\bmat{\vt x_t,-\vt x_t}^\T$ & 28,436\\
\hline
\end{tabular}}
\label{tab_hyper}
\end{table}

In Fig.~\ref{fig_cstr_loss}, the left plot shows the mean testing MSE versus the degree of additive noise over three random trials in a linear scale, while the right plot presents the mean and standard deviation of the testing MSE versus the degree of additive noise over three random trials in a logarithmic scale for better clarity. Similarly, in Fig.~\ref{fig_cstr_lip}, the left plot illustrates the mean Lipschitz constant of the trained model versus the degree of additive noise over three random trials in a linear scale, and the right plot displays the mean and standard deviation of the Lipschitz constant versus the degree of additive noise in a logarithmic scale for improved visualization. The Lipschitz constant of the trained model is estimated using numerical methods (see Remark~\ref{remark_numerical}). The solid line represents the mean across three trials on different unseen reactions, while the shaded region indicates the standard deviation.

\begin{figure}[ht!]
    \centering
    \begin{subfigure}[t]{0.48\textwidth}
        \centering
        \includegraphics[width=\columnwidth]{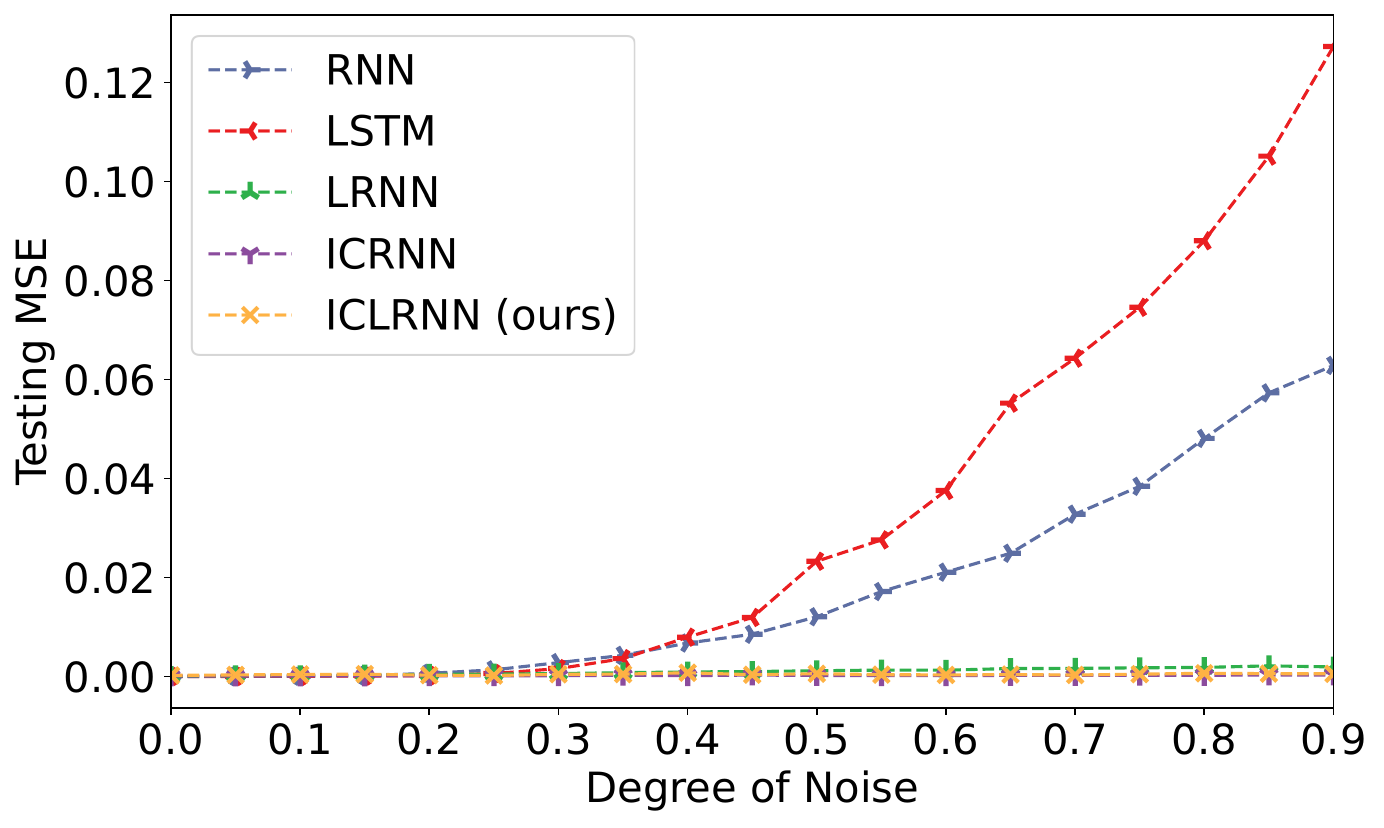}
        \caption{Linear scale}
        \label{fig_linear_loss}
    \end{subfigure}
    ~ 
    \begin{subfigure}[t]{0.48\textwidth}
        \centering
        \includegraphics[width=\columnwidth]{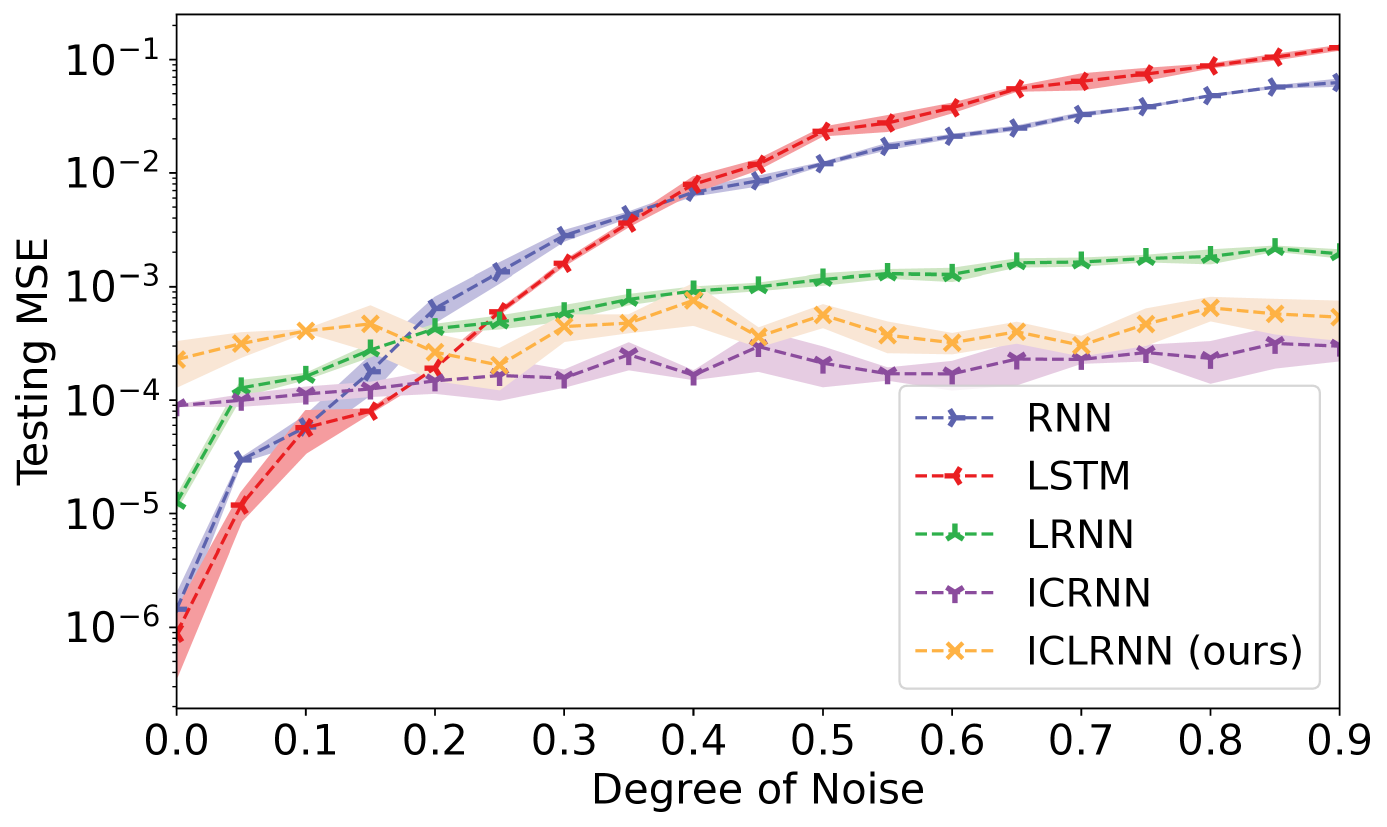}
        \caption{Logarithmic scale}
        \label{fig_log_loss}
    \end{subfigure}
    \caption{Mean testing MSE vs. degree of additive noise over 3 random trials.}
    \label{fig_cstr_loss}
\end{figure}

\begin{figure}[ht!]
    \centering
    \begin{subfigure}[t]{0.48\textwidth}
        \centering
        \includegraphics[width=\columnwidth]{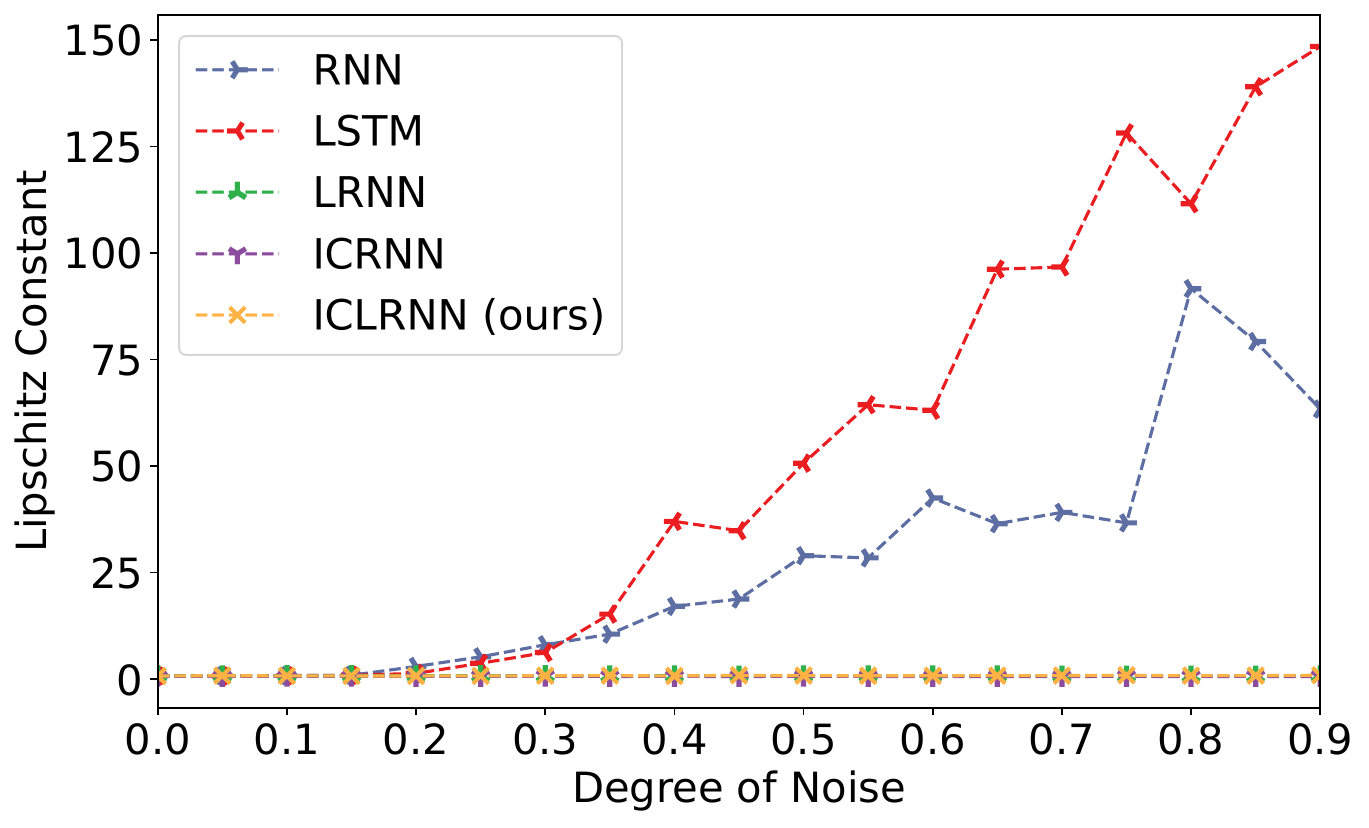}
        \caption{Linear scale}
        \label{fig_linear_lip}
    \end{subfigure}
    ~ 
    \begin{subfigure}[t]{0.48\textwidth}
        \centering
        \includegraphics[width=\columnwidth]{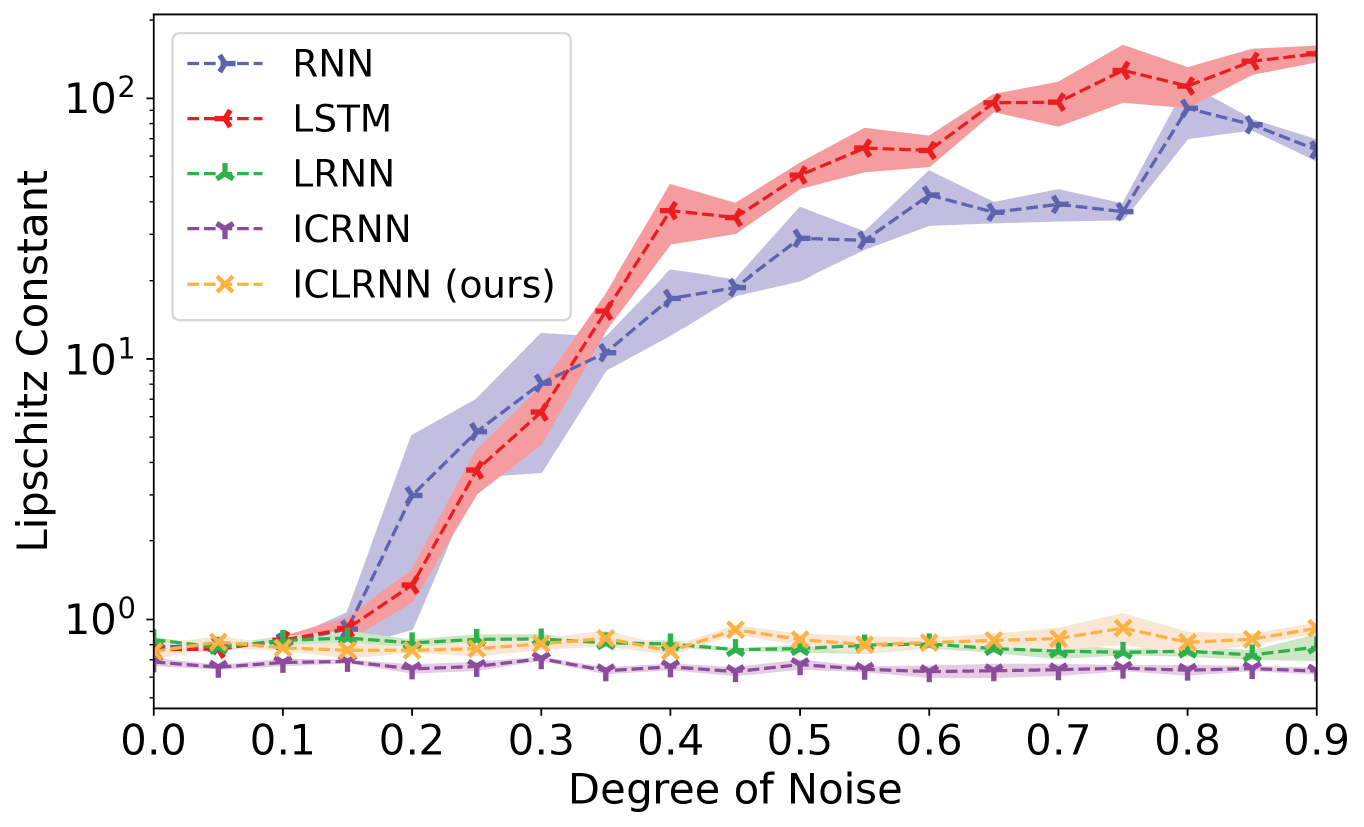}
        \caption{Logarithmic scale}
        \label{fig_log_lip}
    \end{subfigure}
    \caption{Mean Lipschitz constant of the trained model vs. degree of additive noise over 3 random trials.}
    \label{fig_cstr_lip}
\end{figure}

From Figs.~\ref{fig_cstr_loss} and~\ref{fig_cstr_lip}, it is shown that conventional recurrent units, such as standard RNN and LSTM, lack robustness against noise. The higher Lipschitz constant observed for LSTM compared to RNN under increasing noise levels may result from overfitting to noisy data due to its advanced gating mechanisms. In contrast, LRNN, ICRNN, and ICL-RNN demonstrate robustness across different noise levels, maintaining small Lipschitz constants (less than 1), which implies that they exhibit 1-Lipschitz behavior across various noise levels. Although ICRNN does not theoretically satisfy the Lipschitz property, the convexity constraint effectively enhances its robustness to noise. This robustness is because the convexity constraint simplifies the model's learning capability, enabling it to learn non-convex functions (such as noisy data) through convex approximations. This simplification inherently helps prevent ``overfitting'' to noise during training.

The advantage of ICL-RNN over LRNN lies in its lower testing MSE, indicating higher accuracy in modeling the system (see Fig.~\ref{fig_cstr_loss}), and its significantly lower FLOPs, reflecting reduced model complexity. Specifically, for a 2-layer architecture with 64 neurons per layer, LRNN requires 159,508 FLOPs, whereas ICL-RNN only requires 28,436 FLOPs - approximately 5.6 times fewer (see Table~\ref{tab_hyper}). This demonstrates that a less complex model can achieve more accurate results.

Similarly, ICL-RNN offers a significant advantage over ICRNN in terms of FLOPs, representing a less complex model under the same neural architecture. For a 2-layer architecture with 64 neurons per layer, ICRNN requires 79,892 FLOPs, while ICL-RNN only requires 28,436 FLOPs - approximately 2.8 times fewer (see Table~\ref{tab_hyper}). Additionally, ICRNN has a theoretical structural limitation in that its hypothesis space cannot be very large. Due to the introduction of additional trainable parameters in the standard RNN architecture, ICRNN performs well when the hypothesis space is small (e.g., 2 layers with fewer than 128 neurons per layer). However, as the hypothesis space increases (e.g., 2 layers with more than 128 neurons per layer), training becomes unstable (see Table~\ref{tab_nan}). This instability results in the MSE becoming Not a Number (NaN) \footnote{In computing, NaN, which stands for ``Not a Number'', is a numeric data type that represents an undefined or unrepresentable value. It can be used to represent: infinity, the result of dividing by zero, a missing value, and the square root of a negative number.}, rendering the model untrainable unless stabilization techniques such as layer normalization or batch normalization are employed. However, such techniques should be avoided in ICNN training because they introduce nonlinear scaling, violating the input convexity property of the final model. In summary, as the hypothesis space grows (e.g., by increasing the number of neurons), ICRNN's testing MSE rises and eventually becomes NaN, whereas ICL-RNN maintains a small and stable testing MSE.

\begin{table}[htbp]
\centering
\vspace{.5em}
\caption{Testing MSE of ICRNN and ICL-RNN with respect to increasing hypothesis space}
\label{tab_nan}
\resizebox{1\linewidth}{!} {
\renewcommand{\arraystretch}{1.5} 
\begin{tabular}{c|c|c|c|c}
\hline
& \multicolumn{2}{c|}{\textbf{Testing MSE}} & \multicolumn{2}{c}{\textbf{Lipschitz Constant}}\\
\cline{2-5}
\textbf{Model Size} & \textbf{ICRNN} & \textbf{ICL-RNN} & \textbf{ICRNN} & \textbf{ICL-RNN} \\
\hline
(32, 32) & $\bm{8.87 \times 10^{-5} \pm 3.964 \times 10^{-7}}$ & $4.61 \times 10^{-4} \pm 1.899 \times 10^{-4}$ & $6.21 \times 10^{-1} \pm 5.547 \times 10^{-3}$ & $7.35 \times 10^{-1} \pm 1.148 \times 10^{-1}$ \\
(64, 64) & $\bm{9.01 \times 10^{-5} \pm 2.423 \times 10^{-7}}$ & $1.63 \times 10^{-4} \pm 3.879 \times 10^{-5}$ & $6.88 \times 10^{-1} \pm 2.154 \times 10^{-2}$ & $7.62 \times 10^{-1} \pm 6.123 \times 10^{-2}$ \\
(128, 128) & $3.92 \times 10^{-4} \pm 6.278 \times 10^{-5}$ & $\bm{2.89 \times 10^{-4} \pm 8.914 \times 10^{-5}}$ & $8.96 \times 10^{-1} \pm 2.392 \times 10^{-2}$ & $8.24 \times 10^{-1} \pm 4.015 \times 10^{-2}$ \\
(256, 256) & $3.58 \times 10^{2} \pm 9.456$ & $\bm{7.69 \times 10^{-5} \pm 1.891 \times 10^{-6}}$ & $2.92 \times 10^{1} \pm 2.738$ & $7.05 \times 10^{-1} \pm 7.269 \times 10^{-3}$ \\
(512, 512) & NaN & $\bm{2.25 \times 10^{-4} \pm 2.007 \times 10^{-5}}$ & NaN & $7.39 \times 10^{-1} \pm 4.494 \times 10^{-2}$\\
\hline
\end{tabular}
}
\end{table}
\subsection{Computational Efficiency in MPC for CSTR Control}

In addition to robustness against noise, model computational efficiency is also a significant factor in process modeling, optimization, and control applications. In this work, we employ the framework of MPC to evaluate the models’ computational efficiency, which is an advanced control methodology widely implemented in industrial chemical plants and energy systems. Specifically, a typical MPC consists of model prediction in the horizons of a fixed temporal window, optimal solving of a properly set objective function, and actuator correction by updating the first horizon’s optimal solution. Therefore, the computational efficiency of MPC largely depends on how rapidly the optimal solution can be obtained, which presents challenges to the model’s tractability in optimization problems, especially those of systems with strong nonlinearity.

We first evaluate the computational efficiency of MPC using different neural network models for the CSTR system. In particular, the initial states $ C_A - C_{As}$ and $ T - T_s$ are set to $-1.65\ \mathrm{kmol/m^3}$ and $72\ \mathrm{K}$, with the set points of both set to 0.  The closed-loop state profiles under the MPC using RNN, LSTM, LRNN, ICRNN, and ICLRNN are shown in Fig.~\ref{fig_cstr_results}, and during the control processes, it is observed that the computation time of ICLRNN-MPC and ICRNN-MPC is notably lower than that of the others. Specifically, the average computation time of solving MPC for each time step is reported in Table~\ref {tab_mpctime}.


\begin{figure}[ht!]
    \centering
    \begin{subfigure}[t]{0.49\textwidth}
        \centering
        \includegraphics[width=\textwidth]{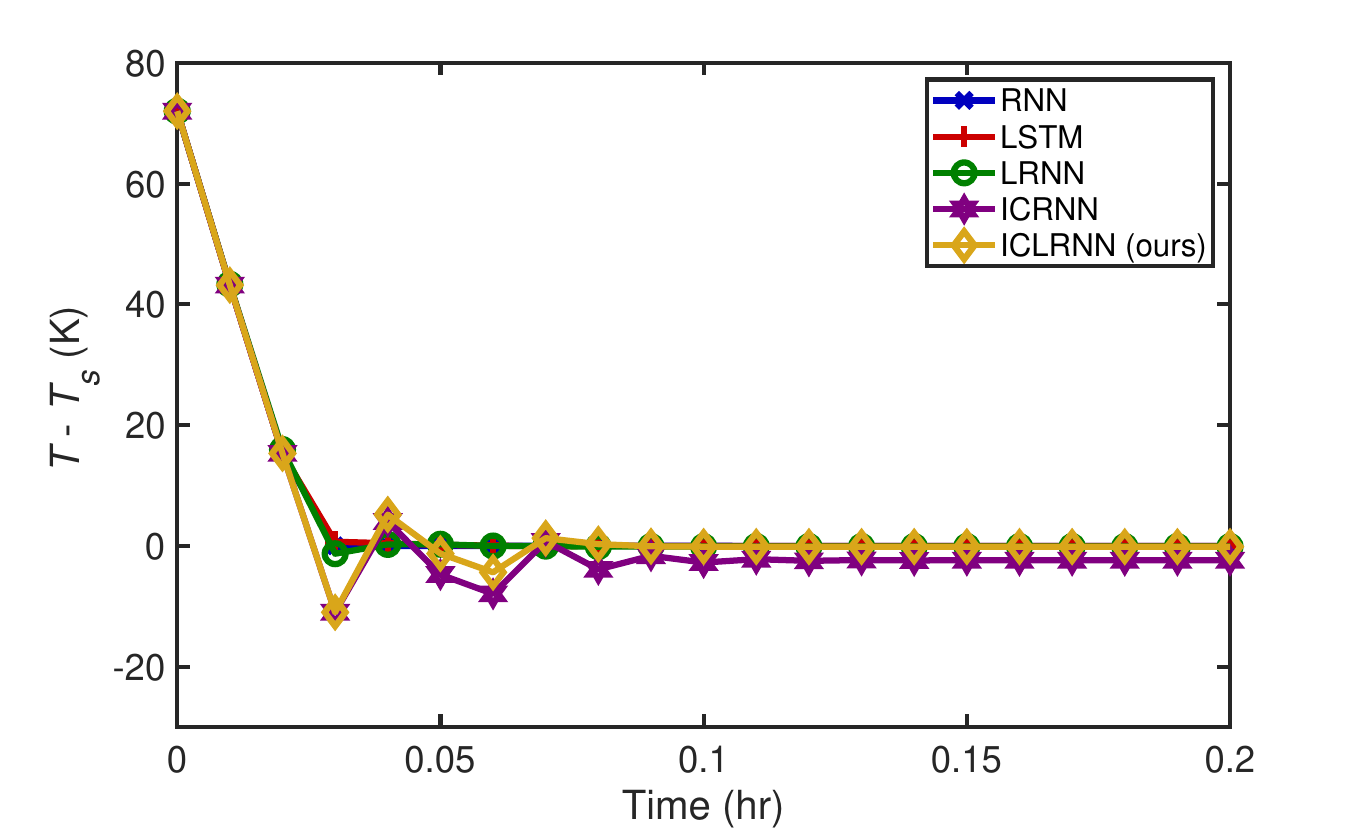}
        \label{fig_cstr_T}
    \end{subfigure}
    \hspace{0.0001\textwidth}
    \begin{subfigure}[t]{0.49\textwidth}
        \centering
        \includegraphics[width=\textwidth]{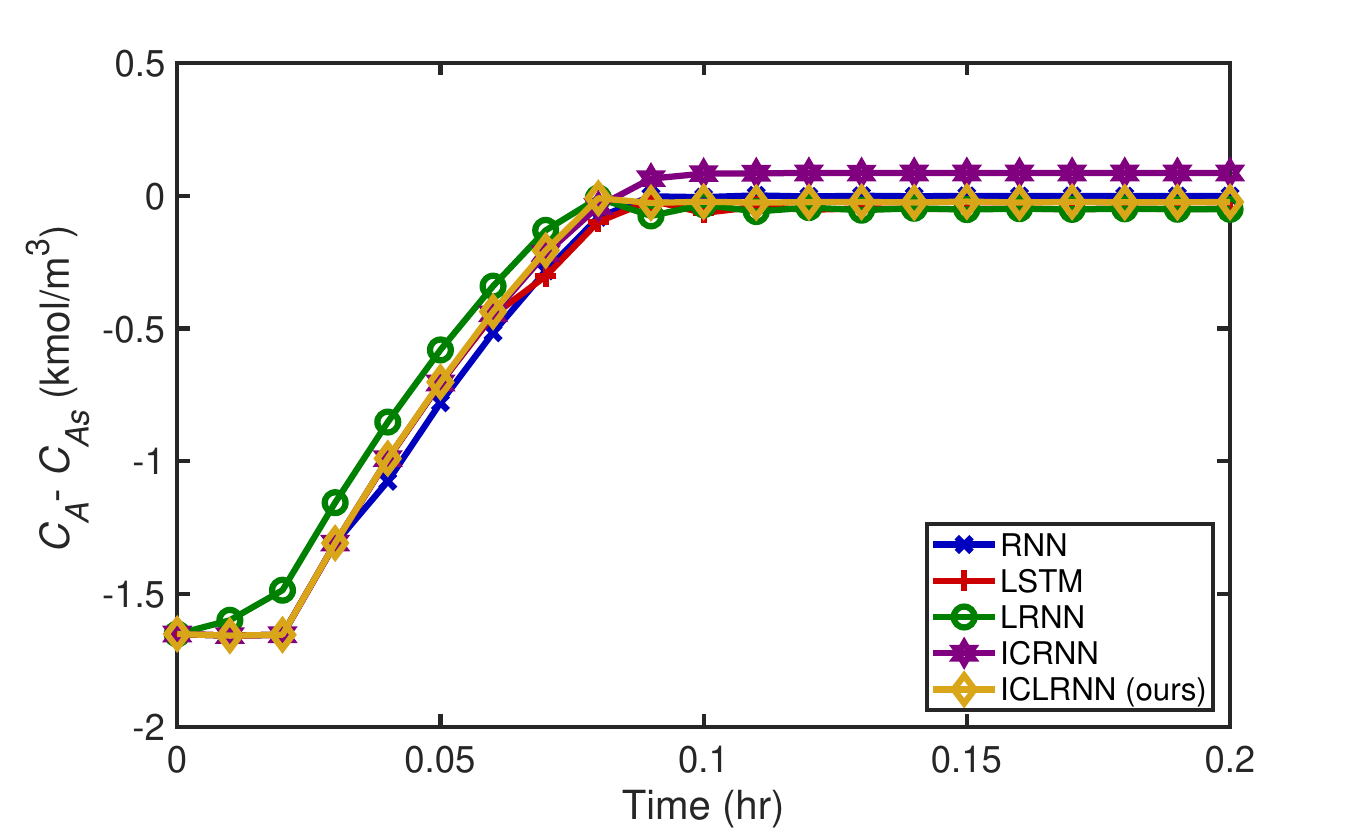}
        \label{fig_cstr_Ca}
    \end{subfigure}

    \caption{Closed-loop state profiles under neural network model-based MPC.}
    \label{fig_cstr_results}
\end{figure}

\begin{table}[htbp] 
\centering
\caption{MPC computation time of neural network models}
\vspace{.5em}
\resizebox{0.8\linewidth}{!} {
\renewcommand{\arraystretch}{1.5} 
\begin{tabular}{>{\centering\arraybackslash}p{4cm}|>{\centering\arraybackslash}p{10cm}}
\hline
\textbf{Model} & \textbf{Average computation time (s)}\\
\hline
Plain RNN & 152.254
\\
Plain LSTM & 149.494\\
LRNN & 141.467\\
ICRNN & 81.779\\
ICL-RNN (Ours) & 100.997\\
\hline
\end{tabular}}
\label{tab_mpctime}
\end{table}
The average computation time of ICLRNN-MPC is 33.67\%, 32.44\%, and 28.61\% lower than that of RNN-MPC, LSTM-MPC, and LRNN-MPC, respectively, and is 19.03\%  higher than that of ICRNN-MPC. This advantage in the computational efficiency of ICLRNN and ICRNN is due to the convex optimization introduced in their training process. 

Furthermore, while the introduction of the Lipschitz constraint makes the computational efficiency of ICLRNN-MPC slightly lower than that of ICRNN-MPC, it effectively enhances the robustness against noise (as shown in Sec.~\ref{sec_cstr_modeling_performance})  without significantly compromising the computational efficiency. In this case, the proposed ICLRNN integrates strong robustness with high computational efficiency by integrating convex optimization and Lipschitz constraint. This synergy provides practical value in real-world process control, where models’ tolerance for data noise and computational efficiency during operation are both highly required. 

\section{Modeling and Control of ORC-Based Waste Heat Recovery System}
\label{sec_orc}
\subsection{ORC-Based Waste Heat Recovery System}

In this section, we consider an ORC-based waste heat recovery system, a more complex example in energy systems, to show the performance of the proposed ICLRNN.  
Specifically, the ORC system is designed to recover the energy of waste heat and convert it into mechanical power. A typical ORC system mainly comprises an evaporator, pump, reservoir, condenser, and expander, through which the organic working fluid R245fa circulates to carry out the energy conversion process. The dynamics of the system can be summarized as follows: The pump transfers the liquid  R245fa to the evaporator, where it absorbs heat from the waste heat source and undergoes phase transition into vapor, which then expands through the expander to generate the shaft work. Subsequently, the vapor is condensed back to liquid form in the air-cooled condenser and recirculated to the evaporator, thereby completing the ORC  circulation. 

The first-principles model of the ORC system adopted in this study is based on \citep{shi2023orc},  
The reservoir outlet temperature and pressure are considered to be constants due to the assumption of a sufficiently large reservoir. Furthermore, given that the condenser is typically subject to negligible disturbances in practical operation, the overall system dynamics is predominantly governed by the heat and mass transfer processes within the pump, expander, and most critically, the evaporator. In this case, the models of the expander and pump are  given as follows:
\begin{subequations}
\begin{align}
    \dot{m}_{exp} & = \eta_{v, exp}\rho_{o, ex}V_{exp}\omega_{exp}\\
    h_{o, exp} & = h_{i, exp} + \eta_{ise, exp}(h_{ise, exp}-h_{i, exp})
\end{align} 
\end{subequations}
\begin{subequations}
\begin{align}
    \dot{m}_{pump} & = \eta_{v, pump}\rho_{o, pump}V_{pump}\omega_{pump}\\
    h_{o, pump} & = h_{i, pump} + \frac{1}{\eta_{ise, pump}}(h_{ise, pump}-h_{i, pump})
\end{align} 
\end{subequations}
where $\dot{m}$ denotes the mass flow rates through the expander and pump, with subscripts ``exp'' and ``pump'' representing the expander and pump, respectively. $\eta_{v}$, $\rho_{o}$, $V$, and $\omega$ are the corresponding volumetric efficiency, outlet fluid density, displacement volume, and rotational speed, respectively; $h_{i}$ and $h_{o}$ represent the inlet and outlet specific enthalpy, while $h_{ise}$ and $\eta_{ise}$ denote the isentropic outlet enthalpy and efficiency.
Regarding crucial evaporator dynamics, the moving boundary (MB) modeling method used in \citep{shi2023orc} divides the evaporator into three regions (subcooled, two-phase, superheated), corresponding to the phase states of the working fluid along the evaporator. Consequently, the mass and energy conservation equations for each region ($k=1, 2, 3$ represent the subcooled, two-phase, superheated regions of the evaporator, respectively) are formulated as follows: 
\begin{equation}
\label{eq_Aeva}
    \frac{\partial A_{eva}\rho^k_{eva}}{\partial t} = \frac{\partial \dot{m}^k_{eva}}{\partial z}
\end{equation}
\begin{equation}
\label{eq_hkeva}
\frac{\partial \dot{m}^k_{eva} h^k_{eva}}{\partial z} = - \frac{\partial(A_{eva}\rho^k_{eva} h^k_{eva}-A_{eva}\rho^k_{eva})}{\partial t} + \pi\alpha^k_{i, eva} D^k_{i, eva}(T^k_{w, eva}-\bar{T}^k_{r, eva})
\end{equation}
where $A_{eva}$ is the flow cross-sectional area of the evaporator channel. $\rho^k_{eva}$, $\dot{m}^k_{eva}$, and $h^k_{eva}$ are the fluid density, mass flow rate, and enthalpy in region $k$, respectively.  $D^k_{i, eva}$ denote the inner hydraulic diameter in region $k$, $T^k_{w, eva}$ and $\bar{T}^k_{r, eva}$ represent the wall temperature and mean fluid temperature in region $k$, while $z$ and $t$ are defined as the length coordinate and time.   $\alpha^k_{i, eva}$ is the inner heat transfer coefficient in region $k$, which is obtained based on the local mass flow rates and phase distribution in each region. Specifically, the coefficients for the sub-cooled (\(k=1\)) and superheated (\(k=3\)) regions are scaled from reference values: 
\begin{subequations}
\begin{align}
\alpha^1_{i,\mathrm{eva}} &= \alpha^1_{i0,eva} \left(\frac{\dot{m}_{i,\mathrm{eva}}}{\dot{m}_{f_0}}\right)^{0.8}\\
\alpha^3_{i,\mathrm{eva}} &= \alpha^3_{i0,eva} \left(\frac{\dot{m}_{o,\mathrm{eva}}}{\dot{m}_{f_0}}\right)^{0.9}
\end{align}
\end{subequations}
where $\dot{m}_{i, eva}$ and $\dot{m}_{o,\mathrm{eva}}$ are the liquid and vapor mass flow rates, respectively; $\alpha^1_{i0,\mathrm{eva}}$ and $\alpha^3_{i0,\mathrm{eva}}$ denote the reference inner heat transfer coefficients in the sub-cooled and superheated regions, while $\dot{m}_{f0}$ is the reference mass flow rate. Meanwhile, $\alpha^2_{i,\mathrm{eva}}$ for the two-phase region is calculated as a semi-empirical function of the local phase fraction \(x_e\), the inner heat transfer coefficients of the neighboring regions, and the gas-to-liquid density ratio \(d = \rho_g / \rho_l\).
\begin{equation}
\alpha^2_{i,\mathrm{eva}} = \alpha^1_{i,\mathrm{eva}} 
\Bigg[ 
\big((1-x_e) + 1.2 x_e^{0.4} (1-x_e) d^{0.37}\big)^{-2.2} 
+ \bigg(\frac{\alpha^3_{i,\mathrm{eva}}}{\alpha^1_{i,\mathrm{eva}}} x_e^{0.01} (1 + 8(1-x_e)^{0.7}) d^{0.67}\bigg)^{-2} 
\Bigg]^{-0.5}
\end{equation}
In addition, the energy conservation of the pipe wall is also introduced to help increase the model accuracy:
\begin{equation}
(C_p\rho A)_{w,eva}\frac{\partial T^k_{w,eva}}{\partial t} = \pi(D^k_{o,eva}\alpha_{o,eva}(T^k_{a,eva}-T^k_{w,eva})+D^k_{i,eva}\alpha^k_{i,eva}(\bar{T}^k_{r,eva}-T^k_{w,eva}))
\end{equation}
where \((C_p \rho A)_{w, eva}\) represents the thermal capacitance of the evaporator pipe wall, defined as the product of the wall's specific heat at constant pressure \(C_p\), density \(\rho\), and cross-sectional area \(A\). $D^k_{o,eva}$ and  $T^k_{a,eva}$  are the outer diameter of the evaporator pipe and heat source temperature in region $k$, respectively, while $\alpha_{o,eva}$ is the outer heat transfer coefficient, obtained by scaling the reference outer coefficient according to the exhaust air mass flow rate:
\begin{equation}
\alpha_{o,\mathrm{eva}} = \alpha_{o0} \left(\frac{\dot{m}_{a}}{\dot{m}_{a_0}}\right)^{0.5}
\end{equation}
where $\alpha_{o0}$ and \(\dot{m}_{a0}\) are the reference outer heat transfer coefficient and reference exhaust air mass flow rate, respectively. Then, the heat source outlet temperature in region $k$ is presented as:
\begin{equation}
T^k_{o,eva} = e^{-\phi}T^k_{i,eva} + (1-e^{-\phi}) T^k_{w,eva}
\end{equation}
\begin{equation}
\phi = \frac{A^k_{tr, eva}\alpha_{o, eva}}{\dot{m}_a C_a} 
\end{equation}
where $T^k_{o,eva}$ and $T^k_{i,\mathrm{eva}}$ denote the outlet and inlet temperature in region $k$, respectively. $A^k_{\mathrm{tr, eva}}$ is the heat transfer area between the waste heat and the pipe wall in region $k$, and $\dot{m}_a$ and $C_a$ are the mass flow rate and specific heat capacity of the heat source, respectively. Following that, the lengths of the sub-cooled and two-phase regions, $L^1_{eva}$ and $L^2_{eva}$, as well as the evaporator pressure $P_{eva}$ and superheat $SH$, are calculated by integrating Eqs.~(\ref{eq_Aeva}) and~(\ref{eq_hkeva}) with the Leibniz formula, as expressed in  Eq.~(\ref{eq_leibniz}):
\begin{equation}
\label{eq_leibniz}
\int_{z_1}^{z_2} \frac{\partial f(z, t)}{\partial t}\, dz = \frac{d}{dt}\int_{z_1}^{z_2}{\partial  f(z, t)}\, dz + f(z_1, t) \frac{dz_1}{dt} - f(z_2, t) \frac{dz_2}{dt}
\end{equation}
and obtaining the intermediate physical properties (including evaporation pressure at the two-phase region and the saturation temperature at the evaporator outlet) through Refprop 9.1, thereby capturing the system dynamics.  On the basis of the above-mentioned relations, the first-principles  model of the ORC system is constructed as a seven-state ODE:
\begin{equation}
D_{eva}\dot{x}_{eva}=f_{eva}(x_{eva}, u_{eva})
\end{equation}
where $x_{eva}$ denotes the selected state variables [$L^1_{eva}$, $L^2_{eva}$ $T^1_{w,eva}$, $T^2_{w,eva}$, $T^3_{w,eva}$, $P_{eva}$, $SH$], while $u_{eva}$ denotes the input control actions [$\omega_{exp}$, $\omega_{pump}$]. Therefore, the inputs of the ORC system in this study consist of [$\omega_{exp,t}$, $\omega_{pump,t}$, $L^1_{eva,t}$, $L^2_{eva,t}$, $T^1_{w,eva,t}$, $T^2_{w,eva,t}$, $T^3_{w,eva,t}$, $P_{eva,t}$, $SH_t$] at the current time step $t$, while the outputs [$L^1_{eva,t+1}$, $L^2_{eva,t+1}$ $T^1_{w,eva,t+1}$, $T^2_{w,eva,t+1}$, $T^3_{w,eva,t+1}$, $P_{eva,t+1}$, $SH_{t+1}$] at time step $t+1$ entail the state trajectory  over the subsequent $n$ time steps, which is set to $5$ in this case. 

\begin{table}[htbp]
\centering
\caption{Parameter values for the ORC model}
\resizebox{0.8\linewidth}{!} {
\renewcommand{\arraystretch}{1.5} 
\begin{tabular}{>{\centering\arraybackslash}p{3.5cm}|>{\centering\arraybackslash}p{3.5cm}|>{\centering\arraybackslash}p{3.5cm}|>{\centering\arraybackslash}p{3.5cm}}
\hline
\textbf{Parameter} & \textbf{Value} & \textbf{Parameter} & \textbf{Value} \\
\hline
$V_{\mathrm{exp}}$ & $1.4 \times 10^{-4}~\mathrm{m^3}$ & $\eta_{v,\mathrm{exp}}$ & $0.6$ \\
$V_{\mathrm{pump}}$ & $1.3 \times 10^{-5}~\mathrm{m^3}$ & $\eta_{v,\mathrm{pump}}$ & $0.6$ \\
$A_{\mathrm{eva}}$ & $3.142 \times 10^{-4}~\mathrm{m^2}$ & $D_{i,\mathrm{eva}}$ & $0.02~\mathrm{m}$ \\
$D_{o,\mathrm{eva}}$ & $0.022~\mathrm{m}$ & $A_{w,\mathrm{eva}}$ & $6.91 \times 10^{-5}~\mathrm{m^2}$ \\
$\alpha_{o0,\mathrm{eva}}$ & $435~\mathrm{W/(m^2 \cdot K)}$ & $\alpha^{1}_{i0,\mathrm{eva}}$ & $1090.0~\mathrm{W/(m^2 \cdot K)}$ \\
$\alpha^{3}_{i0,\mathrm{eva}}$ & $661.2~\mathrm{W/(m^2 \cdot K)}$ & $\dot{m}_{f_0}$ & $0.132~\mathrm{kg/s}$ \\
$\dot{m}_{a_0}$ & $0.200~\mathrm{kg/s}$ & $C_{p,w,\mathrm{eva}}$ & $0.385~\mathrm{kJ/(kg \cdot K)}$ \\
$\rho_{w,\mathrm{eva}}$ & $8.96 \times 10^{3}~\mathrm{kg/m^3}$ & $L^1_{eva}+L^2_{eva}+L^3_{eva}$ & $9~\mathrm{m}$ \\
\hline
\end{tabular}
}
\label{tab:orc_params}
\end{table}
Similarly, open-loop simulations of this ORC system are also performed using the explicit Euler method to generate data that mimics the real-world data for neural network training. The detailed system parameters are summarized in Table~\ref{tab:orc_params}.

\subsection{System Modeling}
To further evaluate the model's robustness against noise in the application of the ORC system, Gaussian noise is also added to the training data (i.e., output trajectory $\v y$) in an additive manner following Eq.~(\ref{eq_Gaussian}).
Meanwhile, all models are trained and evaluated using consistent configurations, including MinMax scaling (scaling the data to a range between 0 and 1), a batch size of 256, a two-layer neural architecture with 64 neurons per layer (i.e., model size of (64, 64)), the mean-squared error (MSE) loss function, and the Adam optimizer. Detailed hyperparameter values, including the FLOPs for quantifying model complexity, are provided in Table~\ref{tab_hyper_energy}. Based on FLOPs, the model complexity ranks as follows: RNN $<$ ICL-RNN $<$ ICRNN $<$ LRNN $<$ LSTM. Furthermore, it is important to note that all experiments were conducted three times. In the plots, the solid line represents the mean across the three trials on different unseen reactions, while the shaded region illustrates the standard deviation.

\begin{table}[htbp]
\centering
\caption{Hyperparameters of neural network models}
\vspace{.5em}
\resizebox{0.8\linewidth}{!} {
\renewcommand{\arraystretch}{1.5} 
\begin{tabular}{>{\centering\arraybackslash}p{3.5cm}|>{\centering\arraybackslash}p{3.5cm}|>{\centering\arraybackslash}p{3.5cm}|>{\centering\arraybackslash}p{3.5cm}}
\hline
\textbf{Model} & \textbf{Activation} & \textbf{Inputs} & \textbf{FLOPs}\\
\hline
Plain RNN & Tanh & $\vt x_t$ & 30, 499\\
Plain LSTM & Tanh & $\vt x_t$ & 260, 736\\
LRNN & Tanh & $\vt x_t$ & 162, 083\\
ICRNN & ReLU & $\vhat x_t=\bmat{\vt x_t,-\vt x_t}^\T$ & 82, 979\\
ICL-RNN (ours)& ReLU & $\vhat x_t=\bmat{\vt x_t,-\vt x_t}^\T$ & 30, 755\\
\hline
\end{tabular}}
\label{tab_hyper_energy}
\end{table}

The modeling results are comparable to those observed in the CSTR case, including the analysis and conclusions. From Fig.~\ref{fig_energy_loss}, for the testing MSE, the $y$-axis of the left figure is presented in a linear scale, while the right figure uses a logarithmic scale. All three models - ICRNN, LRNN, and ICL-RNN - demonstrate robustness to noise. However, in terms of modeling accuracy, the performance ranks as LRNN $>$ ICL-RNN $>$ ICRNN. Nonetheless, the advantage of the input convex property becomes evident in controlling neural network-based MPC.

\begin{figure}[ht!]
    \centering
    \begin{subfigure}[t]{0.48\textwidth}
        \centering
        \includegraphics[width=\columnwidth]{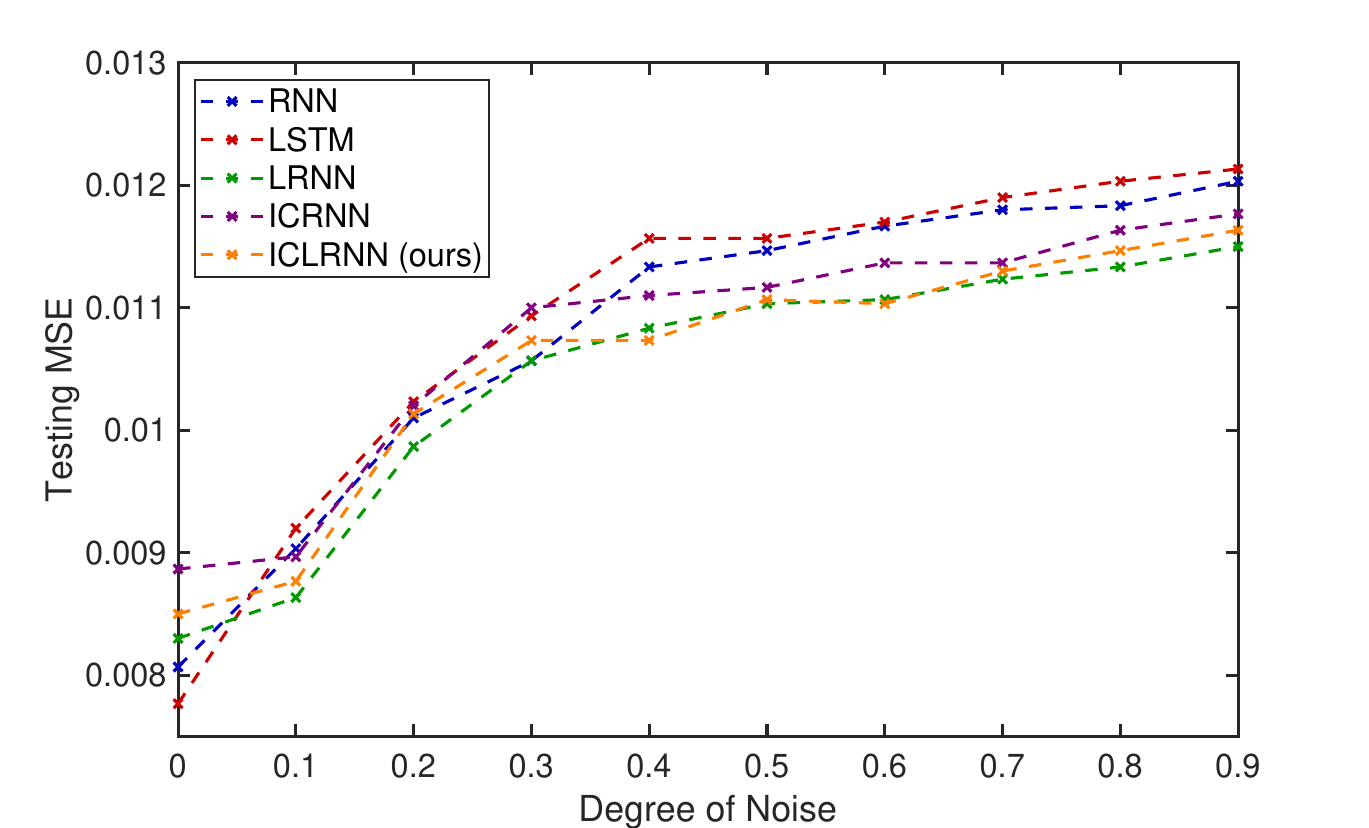} 
        \caption{Linear scale}
        \label{fig_linear_loss_energy}
    \end{subfigure}
    ~ 
    \begin{subfigure}[t]{0.48\textwidth}
        \centering
        \includegraphics[width=\columnwidth]{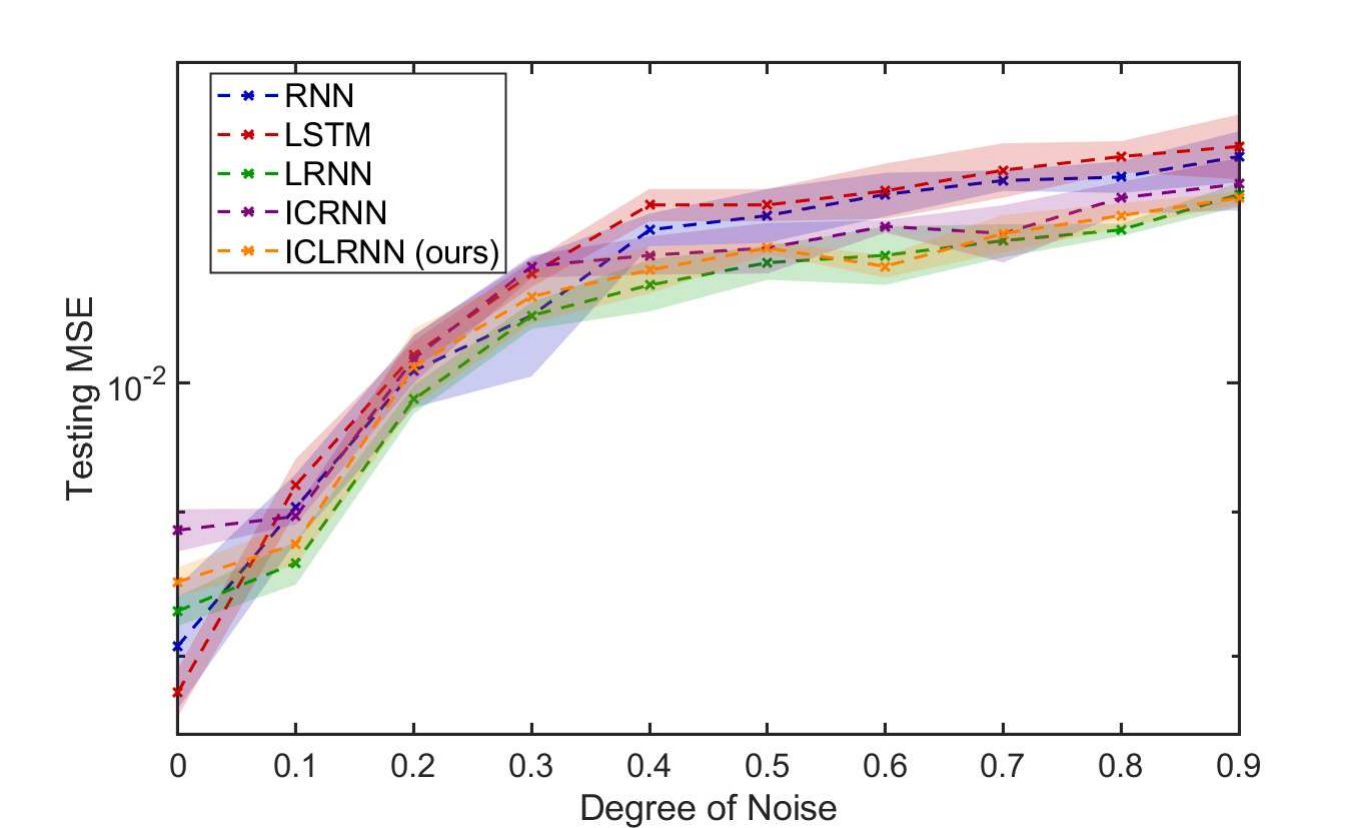}
        \caption{Logarithmic scale}
        \label{fig_log_loss_energy}
    \end{subfigure}
    \caption{Testing MSE vs. degree of additive noise over 3 random trials.}
    \label{fig_energy_loss}
\end{figure}

\begin{figure}[ht!]
    \centering
    \begin{subfigure}[t]{0.48\textwidth}
        \centering
        \includegraphics[width=\columnwidth]{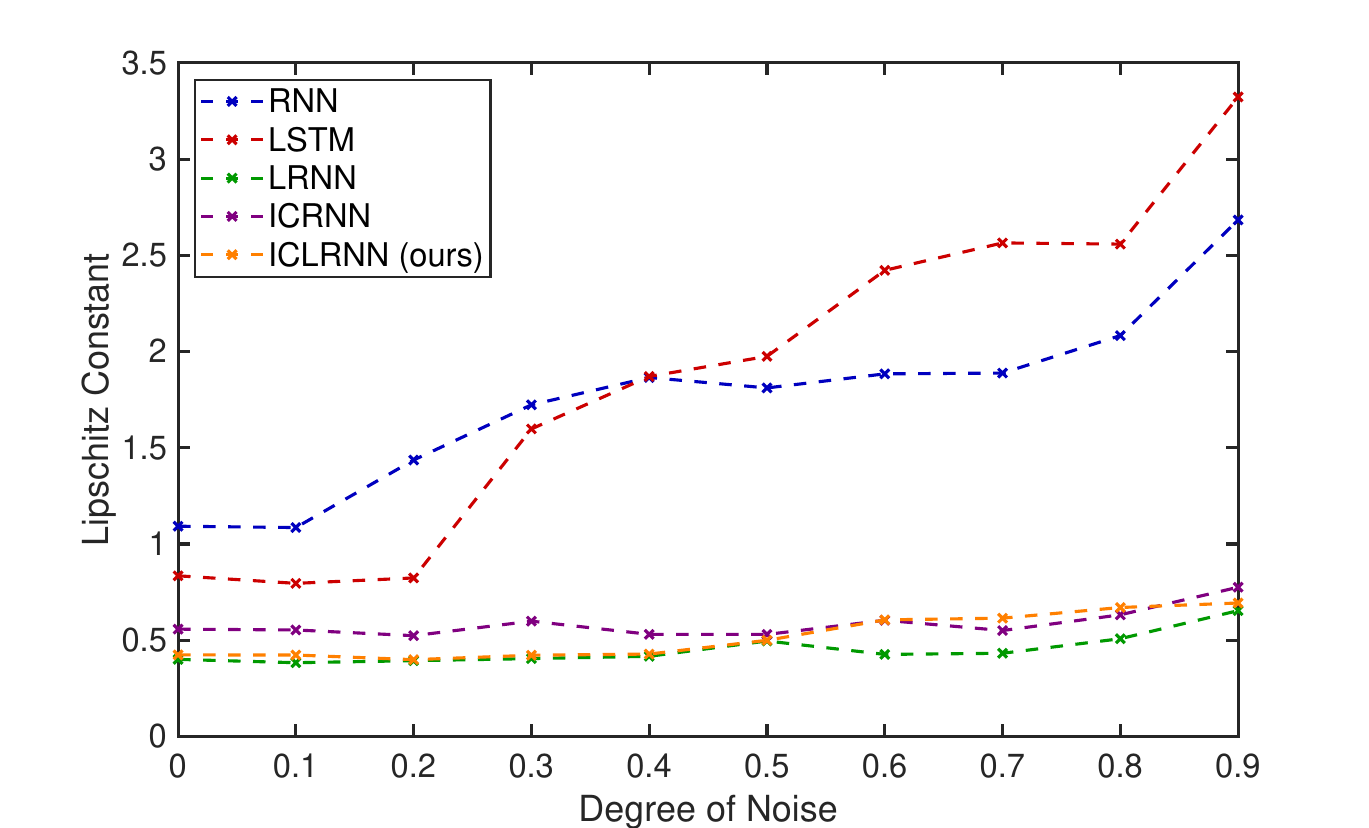}
        \caption{Linear scale}
        \label{fig_linear_lip_energy}
    \end{subfigure}
    ~ 
    \begin{subfigure}[t]{0.48\textwidth}
        \centering
        \includegraphics[width=\columnwidth]{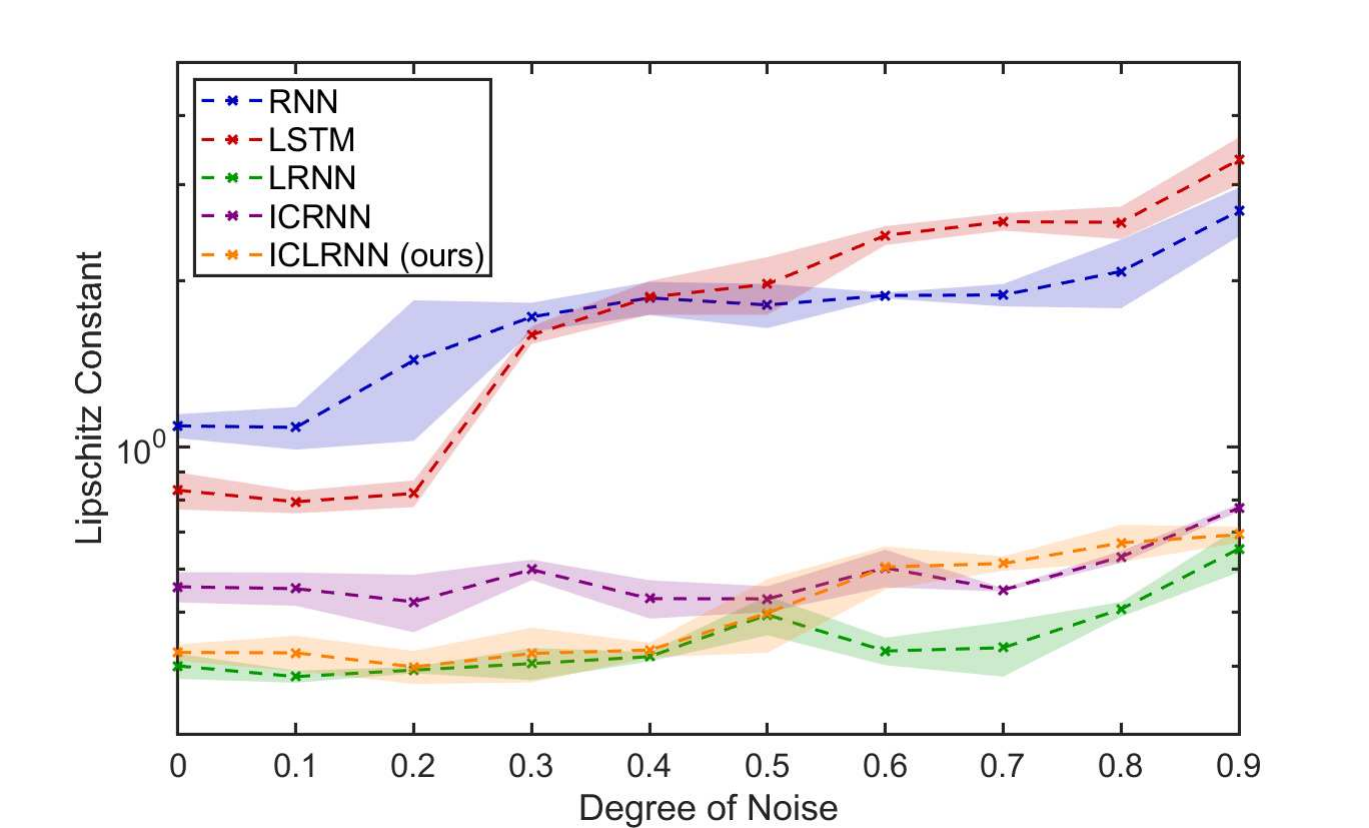}
        \caption{Logarithmic scale}
        \label{fig_log_lip_energy}
    \end{subfigure}
    \caption{Lipschitz constant of the trained model vs. degree of additive noise over 3 random trials.}
    \label{fig_energy_lip}
\end{figure}

Similarly, from Fig.~\ref{fig_energy_lip}, for the Lipschitz constant, the $y$-axis of the left figure is presented in a linear scale, while the right figure uses a logarithmic scale. Due to the recurrent nature of the neural networks, obtaining a precise Lipschitz constant is challenging. Therefore, a numerical method was used to approximate it (see Remark~\ref{remark_numerical}). For all three models - ICRNN, LRNN, and ICL-RNN - the Lipschitz constant is less than or equal to 1, with values that are approximately the same, while any discrepancies observed are due to approximation errors. 

The ICRNN model becomes untrainable when the number of neurons per layer exceeds 256. In contrast, our ICL-RNN remains trainable, as the Lipschitz constraint stabilizes the training process (see Table~\ref{tab_nan_energy}). This stabilization property, achieved through spectral normalization, has been previously observed and utilized in stabilizing the training of GANs in generative AI \citep{miyato2018spectral}, but it has not yet been widely discussed in the context of RNNs.
\begin{table}[htbp]
\centering
\vspace{.5em}
\caption{Testing MSE of ICRNN and ICL-RNN with respect to increasing hypothesis space}
\label{tab_nan_energy}
\resizebox{1\linewidth}{!} {
\renewcommand{\arraystretch}{1.5} 
\begin{tabular}{c|c|c|c|c}
\hline
& \multicolumn{2}{c|}{\textbf{Testing MSE}} & \multicolumn{2}{c}{\textbf{Lipschitz Constant}}\\
\cline{2-5}
\textbf{Model Size} & \textbf{ICRNN} & \textbf{ICL-RNN} & \textbf{ICRNN} & \textbf{ICL-RNN} \\
\hline
(32, 32) & $8.913 \times 10^{-3} \pm 1.14 \times 10^{-4}$& $8.697 \times 10^{-3} \pm 8.81 \times 10^{-5}$& $5.03 \times 10^{-1} \pm 1.03 \times 10^{-1}$ & $5.03 \times 10^{-1} \pm 8.50 \times 10^{-2}$\\
(64, 64) & $8.926 \times 10^{-3} \pm 7.83 \times 10^{-4}$& $8.636 \times 10^{-3} \pm 9.13 \times 10^{-5}$& $6.62 \times 10^{-1} \pm 1.85 \times 10^{-1}$& $5.11 \times 10^{-1} \pm 8.41 \times 10^{-2}$\\
(128, 128) & $1.944 \times 10^{-2} \pm 2.69 \times 10^{-4}$& $8.612 \times 10^{-3} \pm 9.22 \times 10^{-5}$& $7.92 \times 10^{-1} \pm 5.03 \times 10^{-2}$ & $5.30 \times 10^{-1} \pm 8.59 \times 10^{-2}$\\
(256, 256) & $2.304  \pm 2.29 \times 10^{-2}$& $8.904 \times 10^{-3} \pm 1.13 \times 10^{-5}$& $3.352 \pm 1.262$ & $4.07 \times 10^{-1} \pm 5.40 \times 10^{-2}$\\
(512, 512) & NaN& $8.046 \times 10^{-3} \pm 7.79 \times 10^{-6}$& NaN& $4.90 \times 10^{-1} \pm 1.60 \times 10^{-2}$\\
\hline
\end{tabular}
}
\end{table}
\subsection{Computational Efficiency in MPC for ORC Control}
In this section, we evaluate the computational efficiency of MPC using different models for the ORC system. This case study represents a more complex and practically representative benchmark, allowing us to further validate the computational efficiency enhancement achieved by the proposed ICLRNN and examine whether these advantages generalize to systems with different dynamics characteristics. Specifically, the initial states of the ORC system are  shown in Table~\ref{tab:eva_orcinitialstates}, among which $SH$ and $P_{eva}$ are the control objectives with the setpoints, $SH_S$ and $P_{eva,s}$, set to 14.83 K and 2287.03 kPa, respectively. 
\begin{table}[htbp]
\centering
\caption{Initial States of the ORC system}
\resizebox{0.8\linewidth}{!} {
\renewcommand{\arraystretch}{1.5} 
\begin{tabular}{>{\centering\arraybackslash}p{3.5cm}|>{\centering\arraybackslash}p{3.5cm}|>{\centering\arraybackslash}p{3.5cm}|>{\centering\arraybackslash}p{3.5cm}}
\hline
\textbf{Initial State} & \textbf{Value} & \textbf{Initial State} & \textbf{Value} \\ \hline
$T^{1}_{w,\mathrm{eva}}$ & $424.78~\mathrm{K}$ & $T^{2}_{w,\mathrm{eva}}$ & $433.23~\mathrm{K}$ \\ 
$T^{3}_{w,\mathrm{eva}}$ & $474.75~\mathrm{K}$ & $L^{1}_{\mathrm{eva}}$ & $4.14~\mathrm{m}$ \\ 
$L^{2}_{\mathrm{eva}}$ & $3.47~\mathrm{m}$ & $P_{\mathrm{eva}}$ & $2330.00~\mathrm{kPa}$ \\ 
$SH$ & $16.97~\mathrm{K}$ & & \\ \hline
\end{tabular}}
\label{tab:eva_orcinitialstates}
\end{table}


Fig.~\ref{fig_ORC_results} shows the Closed-loop state profiles under the MPCs based on RNN, LSTM, LRNN, ICRNN, and ICLRNN for the ORC system. Consistent with the results obtained for the CSTR control case, both ICLRNN-based and ICRNN-based MPC also achieve shorter computation time for the ORC system compared with RNN, LSTM, and LRNN. In particular, the average MPC computation time for the optimal solution with each model is illustrated in Table~\ref{tab_mpctimeORC}.  

\begin{figure}[ht!]
    \centering
    \begin{subfigure}[t]{0.49\textwidth}
        \centering
        \includegraphics[width=\textwidth]{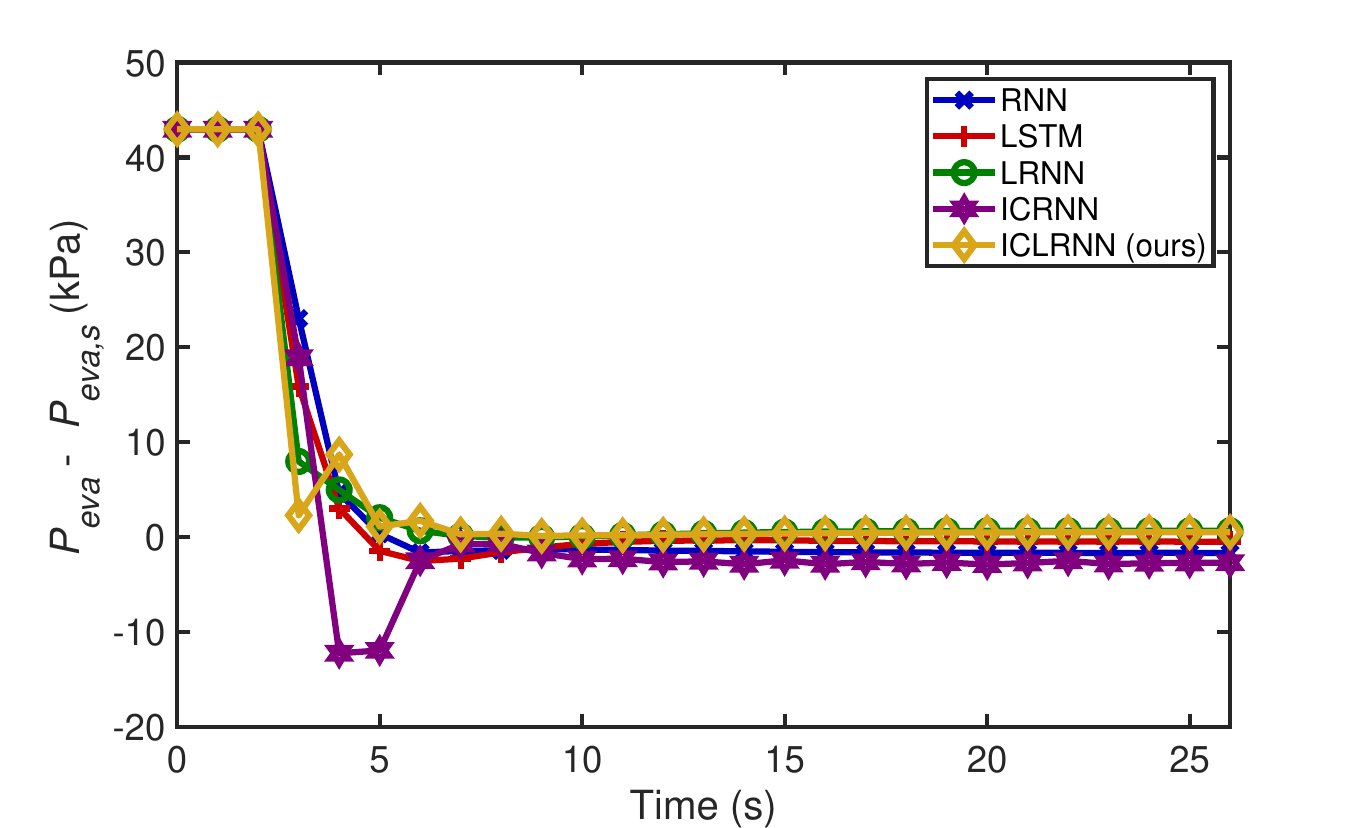}
        \label{fig_ORC_Pe}
    \end{subfigure}
    \hspace{0.001\textwidth}
    \begin{subfigure}[t]{0.49\textwidth}
        \centering
        \includegraphics[width=\textwidth]{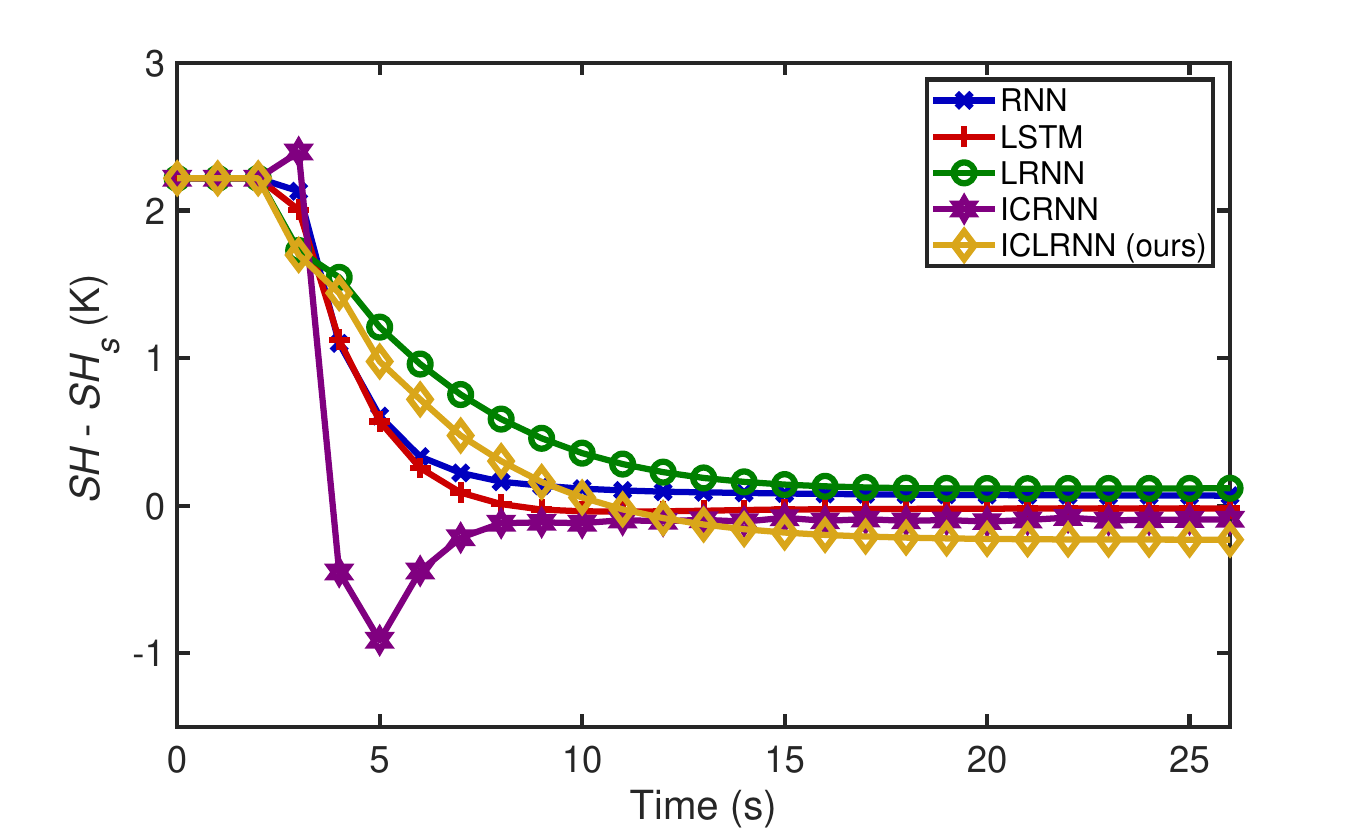}
        \label{fig_ORC_SH}
    \end{subfigure}

    \caption{Closed-loop state profiles under neural network model-based MPC}
    \label{fig_ORC_results}
\end{figure}

\begin{table}[htbp]
\centering
\renewcommand{\arraystretch}{1.5} 
\caption{MPC computation time of neural network models}
\vspace{.5em}
\resizebox{0.8\linewidth}{!} {
\begin{tabular}{>{\centering\arraybackslash}p{4cm}|>{\centering\arraybackslash}p{10cm}}
\hline
\textbf{Model} & \textbf{Average computation time (s)}\\
\hline
Plain RNN & 174.195
\\
Plain LSTM & 170.488\\
LRNN & 164.395\\
ICRNN & 133.237\\
ICL-RNN (ours) & 137.672\\
\hline
\end{tabular}}
\label{tab_mpctimeORC}
\end{table}

Specifically, ICLRNN-MPC reduces the average computation time by 20.97\%, 19.25\%, and 16.26\% compared to RNN-MPC, LRNN-MPC, and LSTM-MPC, respectively, while is 3.33\% higher than ICRNN-MPC. This superior computational efficiency of ICLRNN and ICRNN further confirms the effectiveness of the introduced convex optimization and highlights their advantage in real-time applicability. Furthermore, since the Lipschitz constraint significantly improves the ICLRNN’s robustness over ICRNN in the ORC modeling study, this careful integration of convexification and gradient restriction serves as a comprehensive development, which enables fast convergence of the optimal solution while preserving robustness, even in the presence of the ORC system with more complexity and stronger nonlinearity. 

In general, the proposed ICLRNN outperforms conventional recurrent units in both computational efficiency and robustness through carefully incorporating the convex optimization and the Lipschitz constraint. Its successful application to the modeling and control of engineering scenarios, including the CSTR system and ORC system, demonstrates not only strong noise tolerance but also consistently high computational efficiency during optimal solution. These combined attributes make ICLRNN particularly well-suited for real-time industrial deployment, where both robustness to data uncertainty and rapid solution of optimal control problems are critical for smooth operation.

\section{Conclusion}
This work developed an Input Convex Lipschitz RNN to improve computational efficiency and robustness of neural network-based system modeling and control in various engineering applications, such as the modeling and control of chemical processes and energy systems. The proposed ICL-RNN addresses the challenge of integrating input convexity and Lipschitz continuity, which are often at odds: enforcing one property in neural network design can compromise the other. To overcome this, rigorous analysis was conducted to ensure that both Lipschitz continuity and input convexity are simultaneously satisfied. Finally, the ICL-RNN was shown to outperform state-of-the-art recurrent units in the modeling and control of chemical and energy systems.
\section{Acknowledgments}
Financial support from NRF-CRP Grant 27-2021-0001 is gratefully acknowledged.

\newpage
\bibliography{reference}

\end{document}